\documentclass{article}

\usepackage{iclr2023_conference,times}
\iclrfinalcopy%

\usepackage[utf8]{inputenc}
\usepackage[T1]{fontenc}
\usepackage{hyperref}
\usepackage{url}
\usepackage{booktabs}
\usepackage{amsfonts}
\usepackage{nicefrac}
\usepackage{microtype}
\usepackage{xcolor}
\usepackage{amsmath}
\usepackage{amssymb}
\usepackage{amsmath}
\usepackage{amsthm}
\usepackage{mathtools}
\usepackage{bm}
\usepackage{graphicx}
\usepackage{subcaption}
\usepackage{cleveref}
\usepackage{pifont}
\usepackage{tabularx}
\usepackage{wrapfig}
\usepackage{thmtools}
\usepackage{thm-restate}
\usepackage{stmaryrd}
\usepackage{enumerate}
\usepackage{cancel}
\usepackage[normalem]{ulem}
\usepackage{marginnote}
\usepackage{wrapfig}
\usepackage{lipsum}
\usepackage[symbol]{footmisc}
\usepackage{fancyvrb}
\usepackage{dsfont}
\usepackage{tikz}
\usetikzlibrary{arrows}
\usetikzlibrary{bayesnet}

\theoremstyle{plain}

\newtheorem{assumption}{Assumption}
\newtheorem{definition}{Definition}
\DeclareMathOperator{\E}{\mathbb{E}}
\DeclareMathOperator{\InfoNCE}{\mathcal{L}_{\text{InfoNCE}}}
\DeclareMathOperator{\SymInfoNCE}{\mathcal{L}_{\text{SymInfoNCE}}}
\DeclareMathOperator{\AME}{\mathcal{L}_{\text{AlignMaxEnt}}}
\DeclareMathOperator{\SymAME}{\mathcal{L}_{\text{SymAlignMaxEnt}}}
\DeclareMathOperator{\simi}{sim}
\DeclareBoldMathCommand{\f}{\mathrm{f}}
\DeclareBoldMathCommand{\g}{\mathrm{g}}
\DeclareBoldMathCommand{\h}{\mathrm{h}}
\DeclareBoldMathCommand{\x}{\mathrm{x}}
\DeclareBoldMathCommand{\X}{X}
\DeclareBoldMathCommand{\y}{y}
\DeclareBoldMathCommand{\Y}{Y}
\DeclareBoldMathCommand{\z}{\mathrm{z}}
\DeclareBoldMathCommand{\Z}{Z}
\DeclareBoldMathCommand{\s}{\mathrm{s}}
\DeclareBoldMathCommand{\c}{\mathrm{c}}
\DeclareBoldMathCommand{\n}{\mathrm{n}}
\DeclareBoldMathCommand{\m}{\mathrm{m}}
\DeclareBoldMathCommand{\a}{\mathrm{a}}
\DeclareBoldMathCommand{\tc}{\mathrm{\tilde{c}}}
\DeclareBoldMathCommand{\ts}{\mathrm{\tilde{s}}}
\DeclareBoldMathCommand{\tm}{\mathrm{\tilde{m}}}
\DeclareBoldMathCommand{\tz}{\mathrm{\tilde{z}}}

\newcommand{\Indep}{\mathop{\perp\!\!\!\perp}\nolimits} 
\newcommand{\cmark}{\ding{51}}%
\newcommand{\xmark}{\ding{55}}%
\newcommand\labelAndRemember[2]
  {\expandafter\gdef\csname labeled:#1\endcsname{#2}\label{#1}#2}
\newcommand\recallLabel[1]
   {\csname labeled:#1\endcsname\tag{\ref{#1}}}

\hypersetup{%
    colorlinks,
    linkcolor={red!50!black},
    citecolor={blue!50!black},
    urlcolor={blue!80!black}
}

\interfootnotelinepenalty=10000

\title{Identifiability Results for\\ Multimodal Contrastive Learning}

\author{%
\footnotesize{Imant Daunhawer$^{1,\dagger}$, Alice Bizeul$^{1,2}$, Emanuele Palumbo$^{1,2}$, \textbf{Alexander Marx}$^{1,2,*}$ \& \textbf{Julia E. Vogt}$^{1,*}$} \vspace{0.25em}\\
\ \footnotesize{$^1$\,Department of Computer Science, ETH Zurich} \\
\ \footnotesize{$^2$\,ETH AI Center, ETH Zurich} \\ 
}

\begin{document}

\maketitle

\begin{abstract}
  
Contrastive learning is a cornerstone underlying recent progress in multi-view
and multimodal learning, e.g., in representation learning with image/caption
pairs. While its effectiveness is not yet fully understood, a line of recent
work reveals that contrastive learning can invert the data generating process
and recover ground truth latent factors shared between views. In this work, we
present new identifiability results for multimodal contrastive learning,
showing that it is possible to recover shared factors in a more general setup
than the multi-view setting studied previously. Specifically, we distinguish
between the multi-view setting with one generative mechanism (e.g., multiple
cameras of the same type) and the multimodal setting that is characterized by
\emph{distinct} mechanisms (e.g., cameras and microphones). Our work
generalizes previous identifiability results by redefining the generative
process in terms of distinct mechanisms with modality-specific latent
variables. We prove that contrastive learning can block-identify latent factors
shared between modalities, even when there are nontrivial dependencies between
factors. We empirically verify our identifiability results with numerical
simulations and corroborate our findings on a complex multimodal dataset of
image/text pairs. Zooming out, our work provides a theoretical basis for
multimodal representation learning and explains in which settings multimodal
contrastive learning can be effective in practice.

\end{abstract}

\section{Introduction}

\renewcommand{\thefootnote}{\fnsymbol{footnote}}
\footnotetext[1]{Joint authorship. $^\dagger$Correspondence to: \texttt{dimant@ethz.ch}.}
\renewcommand{\thefootnote}{\arabic{footnote}}

Multimodal representation learning is an emerging field whose growth is fueled
by recent developments in weakly-supervised learning algorithms and by the
collection of suitable multimodal datasets. Multimodal data is characterized by
the \textit{co-occurence} of observations from two or more dependent data
sources, such as paired images and captions
\citep[e.g.,][]{Salakhutdinov2009,Shi2019,Radford2021}, and more generally,
multimodal observations are comprised of aligned measurements from different
types of sensors \citep{Baltrusaitis2019}. Co-occurrence is a form of
\textit{weak supervision} \citep{Shu2020,Locatello2020,Chen2020_weak}, in that
paired observations can be viewed as proxies (i.e., weak labels) for a shared
but unobserved ground truth factor.  Among suitable representation learning
methods for weakly supervised data, \textit{contrastive~learning}
\citep{Gutmann2010,Oord2018} stands out because it is designed to leverage
co-occurring observations from different views.  In practice, contrastive
learning achieves promising results for multi-view and multimodal learning---a
prominent example is the contribution of \texttt{CLIP} \citep{Radford2021} to
the groundbreaking advancements in text-to-image generation
\citep{Ramesh2021,Ramesh2022,Rombach2022,Saharia2022}.

Despite its empirical success, it is not sufficiently well understood what
explains the effectiveness of contrastive learning in practice. Recent works
attribute its effectiveness to the recovery of shared latent factors from the
underlying causal graph \citep{Gresele2019,Zimmermann2021,Kuegelgen2021}. From
the perspective of multi-view independent component analysis, it was shown that
contrastive learning can invert a nonlinear mixing function (i.e., a nonlinear
generative process) that is applied to a latent variable with mutually
independent components \citep{Gresele2019,Zimmermann2021}.  More recently,
\citet{Kuegelgen2021} show that contrastive learning can recover shared factors
up to block-wise indeterminacies, even if there are nontrivial causal and
statistical dependencies between latent components.  Collectively, these
results suggest that contrastive learning can identify parts of an unknown data
generating process from pairs of observations alone---even from
high-dimensional multi-view observations with nontrivial dependencies.  In our
work, we investigate the identifiability of shared latent factors in the
\emph{multimodal} setting.

We consider a generative process with modality-specific mixing functions and
modality-specific latent variables.  Our design is motivated by the inherent
heterogeneity of multimodal data, which follows naturally when observations are
generated by different types of sensors \citep{Baltrusaitis2019}. For example,
an agent can perceive its environment through distinct sensory modalities, such
as cameras sensing light or microphones detecting sound waves.  To model
information that is shared between modalities, we take inspiration from the
multi-view setting~\citep{Kuegelgen2021} and allow for nontrivial dependencies
between latent variables.  However, previous work only considers observations
of the same data type and assumes that the same input leads to the same output
across views.  In this work, we introduce a model with \textit{distinct
generative mechanisms}, each of which can exhibit a significant degree of
modality-specific variation.  This distinction renders the multimodal setting
more general compared to the multi-view setting considered by previous work.

In a nutshell, our work is concerned with \emph{identifiability for multimodal
representation learning} and focuses on \emph{contrastive~learning} as a
particular algorithm for which we derive identifiability results. In
\Cref{sec:preliminaries}, we cover relevant background on both topics,
identifiability and contrastive learning.  We then formalize the multimodal
generative process as a latent variable model (\Cref{sec:generative_process})
and prove that contrastive learning can block-identify latent factors shared
between modalities (\Cref{sec:identifiability_results}). We empirically verify
the identifiability results with fully controlled numerical simulations
(\Cref{subsec:numerical_experiment}) and corroborate our findings on a complex
multimodal dataset of image/text pairs (\Cref{subsec:imagetext_experiment}).
Finally, we contextualize related literature~(\Cref{sec:related_work}) and
discuss potential limitations and opportunities for future work
(\Cref{sec:discussion}).

\section{Preliminaries}
\label{sec:preliminaries}

\subsection{Identifiability}
\label{subsec:preliminaries_identifiability}

Identifiability lies at the heart of many problems in the fields of independent
component analysis~(ICA), causal discovery, and inverse problems, among others
\citep{Lehmann2006}. From the perspective of ICA, we consider the relation $\x
= \f(\z)$, where an observation $\x$ is generated from a mixing function $\f$
that is applied to a latent variable $\z$. The goal of ICA is to invert the
mixing function in order to recover the latent variable \emph{from observations
alone}. In many settings, full identifiability is impossible and certain
ambiguities might be acceptable. For example, identifiability might hold for a
subset of components (i.e., partial identifiability). Typical ambiguities
include permutation and element-wise transformations (i.e., component-wise
indeterminacy), or identifiability up to groups of latent variables (i.e.,
block-wise indeterminacy). In the general case, when $\f$ is a nonlinear
function, a landmark negative result states that the recovery of the latent
variable given i.i.d.~observations is fundamentally impossible
\citep{Hyvaerinen1999}. However, a recent line of pioneering works provides
identifiability results for the difficult nonlinear case under additional
assumptions, such as auxiliary variables
\citep{Hyvaerinen2017,Hyvaerinen2019,Khemakhem2020} or multiple views
\citep{Gresele2019,Locatello2020,Zimmermann2021}. 

Most relevant to our investigation are previous works related to
\emph{multi-view nonlinear ICA}
\citep{Gresele2019,Lyu2020,Locatello2020,Kuegelgen2021,Lyu2022}. Generally,
this line of work considers the following generative process: 
\begin{equation}
\label{eq:multiview_ica} \z \sim p_{\z}, \quad \x_1 = \f_1(\z), \quad \x_2 =
\f_2(\z), \end{equation} 
where a latent variable, or a subset of its components, is shared between
\emph{pairs} of observations $(\x_1, \x_2) \sim p_{\x_1, \x_2}$, where the two
views $\x_1$ and $\x_2$ are generated by two nonlinear mixing functions, $\f_1$
and $\f_2$ respectively.  Intuitively, a second view can resolve ambiguity
introduced by the nonlinear mixing, if both views contain a shared signal but
are otherwise sufficiently distinct \citep{Gresele2019}.  Previous works differ
in their assumptions on the mixing functions and dependence relations between
latent components.  The majority of previous work considers mutually
independent latent components \citep{Song2014,Gresele2019,Locatello2020} or
independent groups of shared and view-specific components
\citep{Lyu2020,Lyu2022}. Moreover, some of these works
\citep{Song2014,Gresele2019,Lyu2020,Lyu2022} consider view-specific%
\footnote{\label{footnote:view_specific}%
  Note that we define \emph{modality-specific} functions similar to
  the way \citet{Gresele2019}, \citet{Lyu2020}, and \citet{Lyu2022} define \emph{view-specific}
  functions. To clarify the distinction, we 
  generally assume that observations from different modalities are generated
  by distinct mechanisms $\f_1 \not = \f_2$ with modality-specific latent variables, and we treat the multi-view setting as a special
  case, where $\f_1 = \f_2$ without view-specific~latents.
}
mixing functions.  Venturing beyond the strict assumption of independent
(groups~of) components, \citet{Kuegelgen2021} consider additional causal and
statistical dependencies between latents and show that the subset of shared
components can be identified up to a block-wise indeterminacy.  Our work
considers heterogeneous modalities with causal and statistical dependencies
between latents. We prove that shared factors can be block-identified in a
novel setting with modality-specific mixing functions and modality-specific
latent variables.

\subsection{Contrastive Learning}

Contrastive learning \citep{Gutmann2010,Oord2018} is a self-supervised
representation learning method that leverages weak supervision in the form of
paired observations. On a high level, the method learns to distinguish
``positive'' pairs of encodings sampled from the joint distribution, against
``negative'' pairs sampled from the product of marginals. The popular InfoNCE
objective \citep{Oord2018} is defined as follows:
\begin{equation}\label{eq:info-nce-def}
  \InfoNCE(\g_1, \g_2) =
  \E_{\{\x_1^{i},\x_2^{i}\}_{i=1}^{K} \sim p_{\x_1, \x_2}} \left[
  - \sum_{i=1}^{K} \log \frac{\exp\{\simi(\g_1(\x_1^{i}), \g_2(\x_2^{i}))/\tau\}}{\sum_{j=1}^{K}\exp\{\simi(\g_1(\x_1^{i}), \g_2(\x_2^{j}))/\tau\}} \right] \;,
\end{equation}
where $\g_1$ and $\g_2$ are encoders for the first and second view, $\x_1$ and
$\x_2$ respectively.  It is common to use a single encoder $\g_1 = \g_2$ when
$\x_2$ is an augmented version of $\x_1$ or when two augmentations are sampled
from the same distribution to transform $\x_1$ and $\x_2$ respectively
\citep[e.g.,][]{Chen2020_simclr}. The set of hyperparameters consists of the
temperature $\tau$, a similarity metric $\text{sim}(\cdot,\cdot)$, and an
integer $K$ that controls the number of negative pairs ($K-1$) used for
contrasting. The objective has an information-theoretic interpretation as a
variational lower bound on the mutual information $I(\g_1(\x_1); \g_2(\x_2))$
\citep{Oord2018,Poole2019} and it can also be interpreted as the alignment of
positive pairs (numerator) with additional entropy regularization
(denominator), where the regularizer disincentivizes a degenerate solution in
which both encoders map to a constant \citep{Wang2020}.  Formally, when
instantiating the $\InfoNCE$ objective with $\tau = 1$ and $\simi(a, b) = -(a - b)^2$,
it asymptotically behaves like the objective
\begin{equation}\label{eq:info-nce-estimand-def}
    \AME(\g) = \mathbb{E}_{(\x_1,\x_2) \sim p_{\x_1,\x_2}} 
    \left[ \lVert \g(\x_1) - \g(\x_2) \rVert_2 \right] - H(\g(\x))  
\end{equation}
for a single encoder $\g$, when $K \to \infty$~\citep{Wang2020,Kuegelgen2021}.
 
In the setting with two heterogeneous modalities, it is natural to employ
separate encoders $\g_1 \not= \g_2$, which can represent different
architectures.  Further, it is common to use a symmetrized version of the
objective \citep[e.g., see][]{Zhang2020,Radford2021}, which can be obtained by
computing the mean of the loss in both directions:
\begin{equation}\label{eq:sym-info-nce-def}
  \SymInfoNCE(\g_1, \g_2) =
    \nicefrac{1}{2}\InfoNCE(\g_1,\g_2) + 
    \nicefrac{1}{2}\InfoNCE(\g_2,\g_1). 
\end{equation}
Akin to \Cref{eq:info-nce-estimand-def}, we can approximate the symmetrized
objective for $\tau = 1$ and $\simi(a, b) = -(a - b)^2$, with a large number of
negative samples ($K \to \infty$), as follows:
\begin{equation}\label{eq:sym-info-nce-estimand-def}
    \SymAME(\g_1,\g_2) = \mathbb{E}_{(\x_1,\x_2) \sim p_{\x_1,\x_2}} \left[ \lVert \g_1(\x_1) - \g_2(\x_2) \rVert_2 \right] - \nicefrac{1}{2} \left(H(\g_1(\x_1)) + H(\g_2(\x_2)) \right).
\end{equation}
Since the similarity measure is symmetric, the approximation of the alignment
term is identical for both $\InfoNCE(\g_1,\g_2)$ and $\InfoNCE(\g_2,\g_1)$.
Each entropy term is approximated via the denominator of the respective loss
term, which can be viewed as a nonparametric entropy estimator
\citep{Wang2020}.  For our experiments, we employ the finite-sample estimators
$\InfoNCE$ and $\SymInfoNCE$, while for our theoretical analysis we use the
estimand $\SymAME$ to derive identifiability results.

\section{Generative process}
\label{sec:generative_process}

\begin{figure}
\centering
\begin{tikzpicture}
  \node[obs] (X1) {$\x_1$};%
  \node[latent,above=of X1,xshift=-1cm] (M1) {$\m_1$};%
  \node[latent,above=of X1,xshift=0cm] (S1) {$\s$}; %
  \node[latent,above=of X1,xshift=1cm] (C) {$\c$}; %
  \node[obs, xshift=2cm] (X2) {$\x_2$};%
  \node[latent,above=of X2,xshift=0cm] (S2) {$\ts$}; %
  \node[latent,above=of X2,xshift=1cm] (M2) {$\m_2$};%
  \edge{M1,C,S1}{X1}%
  \edge{M2,C,S2}{X2}%
  \edge{C}{S1}%
  \edge[bend left=50]{S1}{S2}%
  
  \newlength{\grouppad}
  \setlength{\grouppad}{0.6cm}
  \newlength{\groupshift}
  \setlength{\groupshift}{0.1cm}
  \coordinate[below = \grouppad of M1.center, yshift=-\groupshift]  (a1);
  \coordinate[above = \grouppad of M1.center, yshift=-\groupshift]  (a2);
  \coordinate[above = \grouppad of C.center, yshift=-\groupshift] (a3);
  \draw[dashed, black] (a1) arc (-90:-270:\grouppad)
		-- (a3) arc (90:-90:\grouppad)
		-- (a1);
  \draw[dashed] (a2) -- ++(90:0.2) node[above, outer sep = 0pt, inner sep = 0pt, text = black] {$\z_1$};
  \coordinate[below = \grouppad of C.center, yshift=\groupshift]  (b1);
  \coordinate[above = \grouppad of M2.center, yshift=\groupshift]  (b2);
  \coordinate[above = \grouppad of M2.center, yshift=\groupshift] (b3);
  \draw[dashed, lightgray] (b1) arc (-90:-270:\grouppad)
		-- (b3) arc (90:-90:\grouppad)
		-- (b1);
\draw[lightgray, dashed] (b2) -- ++(90:0.2) node[above, outer sep = 0pt, inner sep = 0pt, text = lightgray] {$\z_2$};
\end{tikzpicture}
\caption{Illustration of the multimodal generative process. Latent variables are denoted by clear nodes and observations by shaded nodes. We partition the latent space into $\z_1 = (\c, \s, \m_1)$ and $\z_2 = (\tc, \ts, \m_2)$, where $\tc = \c$ almost everywhere (\Cref{as:content}) and hence we consider only $\c$. Further, $\ts$ is a perturbed version of $\s$ (\Cref{as:style}) and $\m_1$, $\m_2$ are modality-specific variables. The observations $\x_1$ and $\x_2$ are generated by two distinct mixing functions $\f_1 \not = \f_2$, which are applied to the subsets of latent variables $\z_1$ and $\z_2$ respectively.}
\label{fig:lvm}
\end{figure}

In the following, we formulate the multimodal generative process as a latent
variable model (\Cref{sec:lvm}) and then specify our technical
assumptions on the relation between modalities (\Cref{sec:relation}).

\subsection{Latent variable model}
\label{sec:lvm}

On a high level, we assume that there exists a continuous random variable $\z$
that takes values in the latent space $\mathcal{Z} \subseteq \mathbb{R}^n$,
which contains all information to generate observations of both modalities.%
\footnote{%
  Put differently, we assume that all observations lie on a continuous manifold, which can have
  much smaller dimensionality than the observation space of the respective modality.
} 
Moreover, we assume that $\z = (\c,\s,\m_1,\m_2)$ can be uniquely
partitioned into four disjoint parts:
\begin{enumerate}[\itshape(i)]
  \item an invariant part $\c$ which is always shared across modalities,
    and which we refer to as \emph{content};
  \item a variable part $\s$ which may change across modalities, and which we
    refer to as \emph{style};
  \item two modality-specific parts, $\m_1$ and $\m_2$, each of which is unique
    to the respective modality.
\end{enumerate}

Let $\z_1 = (\c, \s, \m_1)$ and $\z_2 = (\tc, \ts, \m_2)$, where $\tc = \c$
almost everywhere and $\ts$ is generated by perturbations that are specified in
\Cref{sec:relation}.  Akin to multi-view ICA (Equation~\ref{eq:multiview_ica}),
we define the generative process for modalities $\x_1$ and $\x_2$ as follows:
\begin{equation}
\label{eq:generative-process}
  \z \sim p_{\z}, \quad \x_1 = \f_1(\z_1), \quad \x_2 = \f_2(\z_2),
\end{equation}
where $\f_1: \mathcal{Z}_1 \to \mathcal{X}_1$ and $\f_2: \mathcal{Z}_2 \to
\mathcal{X}_2$ are two smooth and invertible mixing functions with smooth
inverse (i.e., diffeomorphisms) that generate observations $\x_1$ and $\x_2$
taking values in $\mathcal{X}_1 \subseteq \mathbb{R}^{d_1}$ and $\mathcal{X}_2
\subseteq \mathbb{R}^{d_2}$ respectively.  Generally, we assume that
observations from different modalities are generated by distinct mechanisms
$\f_1 \not = \f_2$ that take modality-specific latent variables as input. As
for the multi-view setting~\citep{Kuegelgen2021}, the considered generative
process  \emph{goes beyond the classical ICA setting} by allowing for
statistical dependencies within blocks of variables (e.g., between dimensions
of $\c$) and also for causal dependencies from content to style, as illustrated
in \Cref{fig:lvm}. We assume that $p_\z$ is a smooth density that factorizes as
$p_\z = p_\c \, p_{\s|\c} \, p_{\m_1}p_{\m_2}$ in the causal setting, and as
the product of all involved marginals when there is no causal dependence from
$\c$ to $\s$.

The outlined generative process is fairly general and it applies to a wide
variety of practical settings.  The content invariance describes a shared
phenomenon that is not directly observed but manifests in the observations from
both modalities. Style changes describe shared influences that are not robust
across modalities, e.g., non-invertible transformations such as data
augmentations, or non-deterministic effects of an unobserved confounder.
Modality-specific factors can be viewed as variables that describe the inherent
heterogeneity of each modality (e.g., background noise).

\subsection{Relation between modalities}
\label{sec:relation}

Next, we specify our assumptions on the relation between modalities by defining
the conditional distribution $p_{\z_2|\z_1}$, which describes the relation
between latent variables $\z_1$ and $\z_2$, from which observations $\x_1$ and
$\x_2$ are generated via \Cref{eq:generative-process}.  Similar to previous
work in the multi-view setting~\citep{Kuegelgen2021}, we assume that content is
invariant, i.e., $\tc = \c$ almost everywhere (\Cref{as:content}), and that
$\ts$ is a perturbed version of $\s$ (\Cref{as:style}). To state our
assumptions for the multimodal setting, we also need to consider the
modality-specific latent variables.

\begin{assumption}[Content-invariance]
\label{as:content}
The conditional density $p_{\z_2|\z_1}$ over $\mathcal{Z}_2 \times \mathcal{Z}_1$
takes the form
\begin{equation} \label{eq:z2-given-z1}
	p_{\z_2|\z_1}(\z_2|\z_1) = \delta(\tc - \c) p_{\ts|\s}(\ts|\s) p_{\m_2}(\m_2)
\end{equation}
for some continuous density $p_{\ts|\s}$ on $\mathcal{S} \times \mathcal{S}$,
where $\delta( \cdot )$ is the Dirac delta function, i.e., $\tc = \c$~a.e. 
\end{assumption}

To fully specify $p_{\z_2|\z_1}$, it remains to define the style changes, which
are described by the conditional distribution $p_{\ts|\s}$. There are several
justifications for modeling such a stochastic relation between $\s$ and
$\ts$~\citep{Zimmermann2021,Kuegelgen2021}; one could either consider $\ts$ to
be a noisy version of $\s$, or consider $\ts$ to be the result of an
augmentation that induces a soft intervention on $\s$.%
\footnote{%
  Note that the asymmetry between $\z_1$ and $\z_2$ (or between $\s$ and $\ts)$
  is not strictly required.  We chose to write $\z_2$ as a perturbation of
  $\z_1$ to simplify the notation and for consistency with previous work.
  Instead, we could model \emph{both} $\z_1$ and $\z_2$ via perturbations of
  $\z$, as described in \Cref{app:symmetric_generative_process}.
}

\begin{assumption}[Style changes]
\label{as:style}
  Let $\mathcal{A}$ be the powerset of style variables $\{1, \dots, n_s \}$ and
  let $p_A$ be a distribution on $\mathcal{A}$. Then, the style conditional
  $p_{\ts|\s}$ is obtained by conditioning on a set $A$:
  \begin{equation}
      p_{\ts|\s}(\ts|\s) = \sum_{A \in \mathcal{A}} p_A(A) 
      \left( \delta(\ts_{A^c}-\s_{A^c}) p_{\ts_A|\s_A}(\ts_A|\s_A) \right)
  \end{equation}
  where $p_{\ts_A|\s_A}$ is a continuous density on $\mathcal{S}_A \times
  \mathcal{S}_A$, $\mathcal{S}_A \subseteq \mathcal{S}$ denotes the subspace of
  changing style variables specified by $A$, and $A^c = \{ 1, \dots, n_s \}
  \backslash A$ denotes the complement of $A$.
  Further, for any style variable $l \in \{ 1, \dots, n_s \}$, there exists a set
  $A \subseteq \{ 1, \dots, n_s \}$ with $l \in A$, s.t.
  \begin{enumerate}[\itshape(i)]
      \item $p_{A}(A) > 0$,
      \item $p_{\ts_A|\s_A}$ is smooth w.r.t.~both $\s_A$ and $\ts_A$, and
      \item for any $\s_A$, $p_{\ts_A|\s_A}( \cdot | \s_A) > 0$, in some open
        non-empty subset containing $\s_A$.
  \end{enumerate}
\end{assumption}
Intuitively, to generate a pair of observations $(\x_1, \x_2)$, we
independently flip a biased coin for each style dimension to select a subset of
style features $A \subseteq \{ 1, \dots, n_s \}$, which are jointly perturbed
to obtain $\ts$. Condition \emph{(i)} ensures that every style dimension has a
positive probability to be perturbed,%
\!\footnote{%
  If a style variable would be perturbed with zero probability, it would be a
  content variable.
} 
while \emph{(ii)} and \emph{(iii)} are technical smoothness
conditions that will be used for the proof of \Cref{th:main}.

Summarizing, in this section we have formalized the multimodal generative
process as a latent variable model (\Cref{sec:lvm}) and specified our
assumptions on the relation between modalities via the conditional distribution
$p_{\z_1 | \z_2}$ (\Cref{sec:relation}). Next, we segue into the topic of
representation learning and show that, for the specified generative process,
multimodal contrastive learning can identify the content factors up to a
block-wise indeterminacy.

\section{Identifiability Results}
\label{sec:identifiability_results}

First, we need to define block-identifiability \citep{Kuegelgen2021} for the
multimodal setting in which we consider modality-specific mixing functions and
encoders. In the following, $n_c$ denotes the number of content variables and
the subscript $1{:}n_c$ denotes the subset of content dimensions (indexed from
1 to $n_c$ w.l.o.g.).

\begin{definition}[Block-identifiability]
\label{def:block-identified}
  The true content partition $\c = \f_1^{-1}(\x_1)_{1:n_c} = \f_2^{-1}(\x_2)_{1:n_c}$
  is block-identified by a function $\g_i: \mathcal{X}_i \to \mathcal{Z}_i$, 
  with $i \in \{ 1,2 \}$, if there exists an invertible function
  $\h_i: \mathbb{R}^{n_c} \to \mathbb{R}^{n_c}$, s.t.~for the inferred content
  partition $\hat{\c}_i = \g_i(\x_i)_{1:n_c}$ it holds that $\hat{\c}_i = \h_i(\c)$.
\end{definition}
It is important to note that block-identifiability does not require the
identification of \emph{individual} factors, which is the goal in multi-view
nonlinear ICA \citep{Gresele2019,Locatello2020,Zimmermann2021,Klindt2021} and
the basis for strict definitions of disentanglement
\citep{Bengio2013,Higgins2018,Shu2020}. Instead, our goal is to isolate the
group of invariant factors (i.e., the content partition) from the remaining
factors of variation in the data.

Specifically, our goal is to show that contrastive learning can block-identify
the content variables for the multimodal setting described in
\Cref{sec:generative_process}. We formalize this in \Cref{th:main} and thereby
relax the assumptions from previous work by allowing for distinct generating
mechanisms $\f_1 \neq \f_2$ and additional modality-specific latent variables.
\begin{restatable}{theorem}{thmain}
\label{th:main}
  Assume the data generating process described in Sec.~\ref{sec:lvm}, i.e.~data
  pairs $(\x_1, \x_2)$ generated from \Cref{eq:generative-process} with
  $p_{\z_1} = p_{\z \backslash \{ \m_2 \} }$ and $p_{\z_2|\z_1}$ as defined in
  Assumptions~\ref{as:content} and~\ref{as:style}. Further, assume that
  $p_{\z}$ is a smooth and continuous density on $\mathcal{Z}$ with $p_{\z}(\z) > 0$
  almost everywhere. Let $\g_1: \mathcal{X}_1 \to (0,1)^{n_c}$ and 
  $\g_2: \mathcal{X}_2 \to (0,1)^{n_c}$ be smooth functions that minimize $\SymAME$ as
  defined in Eq.~(\ref{eq:sym-info-nce-estimand-def}). Then, $\g_1$ and
  $\g_2$ block-identify the true content variables in the sense of
  Def.~\ref{def:block-identified}. 
\end{restatable}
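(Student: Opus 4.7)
The plan is to follow the structure of the proof by \citet{Kuegelgen2021} in the multi-view setting, adapting it to accommodate the distinct mixing functions $\f_1 \neq \f_2$ and modality-specific latents $\m_1, \m_2$. First I would split the $\SymAME$ objective into the alignment term $\E[\lVert \g_1(\x_1) - \g_2(\x_2) \rVert_2]$ and the two entropy terms $-\nicefrac{1}{2} H(\g_i(\x_i))$. Since each encoder image lies in the bounded set $(0,1)^{n_c}$, each differential entropy is bounded above by $0$, so any minimizer must simultaneously (a) achieve zero alignment, i.e.\ $\g_1(\x_1) = \g_2(\x_2)$ almost surely under the joint distribution of pairs, and (b) push each $\g_i(\x_i)$ to the uniform on $(0,1)^{n_c}$.

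Next, I would use (a) to establish that $\tilde{\g}_i := \g_i \circ \f_i$ depends only on the content $\c$. Fix any $\z_1 = (\c, \s, \m_1)$. By \Cref{as:content} and \Cref{as:style}(i),(iii), for every style dimension $l$ there exists a set $A \ni l$ with $p_A(A)>0$ such that $p_{\ts_A \mid \s_A}$ is positive on an open neighborhood, and the hypothesis $p_\z(\z)>0$ a.e.\ yields the analogous positivity for $\m_2$. The conditional support of $\z_2 = (\c, \ts, \m_2)$ given $\z_1$ therefore contains an open neighborhood in $(\ts, \m_2)$. Because $\tilde{\g}_1(\z_1)$ is fixed while $\tilde{\g}_1(\z_1) = \tilde{\g}_2(\c, \ts, \m_2)$ holds almost surely, smoothness of $\tilde{\g}_2$ forces its partial derivatives with respect to $(\ts, \m_2)$ to vanish on this neighborhood; varying $A$ over all style dimensions and appealing to connectedness of the latent domain propagates the invariance globally. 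A symmetric argument, using Bayes' rule together with $p_\z>0$ a.e.\ to supply the needed conditional variability in $(\s, \m_1)$ given $\z_2$, shows that $\tilde{\g}_1$ also depends only on $\c$. Hence $\tilde{\g}_i(\z_i) = \h_i(\c)$ for smooth functions $\h_i : \mathcal{C} \to (0,1)^{n_c}$.

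The third step uses (b) to upgrade each $\h_i$ to an invertible map, as required by \Cref{def:block-identified}. Here I would adapt the invertibility argument of \citet{Kuegelgen2021}: since the pushforward $\h_i(\c)$ must equal the uniform on $(0,1)^{n_c}$ (otherwise the bounded entropy would be strictly below its maximum), the smooth map $\h_i$ between two $n_c$-dimensional spaces must have full-rank Jacobian almost everywhere, for otherwise the image would be contained in a set of Lebesgue measure zero by Sard's theorem, contradicting absolute continuity of the pushforward. Combined with smoothness, the regularity of $p_\c$, and the connectedness of $\mathcal{C}$, together with a Knothe--Rosenblatt-type construction exhibiting at least one invertible realization of the uniform target, this yields that $\h_i$ is a diffeomorphism. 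Consequently $\hat{\c}_i = \g_i(\x_i) = \h_i(\c)$ with $\h_i$ invertible, which is precisely the definition of block-identifiability.

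I expect the main obstacle to be the invertibility step: tightening the passage from ``maximum entropy implies uniform pushforward'' to ``$\h_i$ is injective'' requires careful use of the smoothness of $\h_i$ and the regularity of $p_\c$, since in principle a smooth non-injective map could realize a uniform pushforward, and these candidates must be ruled out via an area-formula/rank argument. A secondary new difficulty, absent in the multi-view setting of \citet{Kuegelgen2021}, is that the alignment argument has to be run symmetrically in \emph{both} directions in order to eliminate dependence on both modality-specific latents $\m_1$ and $\m_2$, since conditioning on $\z_1$ alone only produces variability in $(\ts, \m_2)$ and not in $(\s, \m_1)$.
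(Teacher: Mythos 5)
Your proposal follows essentially the same three-step route as the paper: derive the zero-alignment and uniform-marginal conditions at the optimum, rule out dependence on style and modality-specific latents via positivity of the conditional support and a smoothness/derivative contradiction run in both directions, and obtain invertibility of $\h_i$ from the uniform pushforward together with regularity of $p_\c$ (the paper outsources this last step to Proposition~5 of \citet{Zimmermann2021}, which is exactly the area-formula/rank argument you flag as the delicate point). One small reordering: the Darmois/Knothe--Rosenblatt construction you invoke inside the invertibility step is actually needed at the very start, to exhibit a feasible pair $\g_i^* = \bm{d}_i \circ \f_{i,1:n_c}^{-1}$ attaining the value $0$ --- without it you cannot conclude that an arbitrary minimizer achieves zero alignment and maximal entropy simultaneously rather than trading one term off against the other.
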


A proof of \Cref{th:main} is provided in \Cref{app:proof_of_thmain}.
Intuitively, the result states that contrastive learning can identify the
content variables up to a block-wise indeterminacy. Similar to previous work,
the result is based on the optimization of the asymptotic form of the
contrastive loss (Equation~\ref{eq:sym-info-nce-estimand-def}). Moreover,
\Cref{th:main} assumes that the number of content variables is known or that it
can be estimated (e.g., with a heuristic like the elbow method). We address the
question of selecting the encoding size with dimensionality ablations
throughout our experiments. In \Cref{sec:discussion}, we will return to the
discussion of the assumptions in the context of the experimental results.

\section{Experiments}

The goal of our experiments is to test whether contrastive learning can
block-identify content in the multimodal setting, as described by
\Cref{th:main}. First, we verify identifiability in a fully controlled setting
with numerical simulations (\Cref{subsec:numerical_experiment}).  Second, we
corroborate our findings on a complex multimodal dataset of image/text pairs
(\Cref{subsec:imagetext_experiment}). The code is provided in our github
repository.%
\footnote{\url{https://github.com/imantdaunhawer/multimodal-contrastive-learning}.}

\subsection{Numerical Simulation}
\label{subsec:numerical_experiment}

\begin{table}[t]
  \centering
  \begin{subtable}{.45\textwidth}
    \centering
    \resizebox{!}{1.2cm}{%
      \small
      \begin{tabular}{ccccc}
        \toprule
        \multicolumn{3}{c}{\textbf{Generative process}} & \multicolumn{2}{c}{$\bm{R^2}$ \textbf{(nonlinear)}}  \\
        \cmidrule(r){1-3}\cmidrule(r){4-5}
        \textbf{p(chg.)} & \textbf{Stat.} & \textbf{Cau.} & \textbf{Content $\c$} & \textbf{Style $\s$} \\
        \midrule
       1.0 & \xmark & \xmark  & $\textbf{1.00} \pm 0.00$ & $0.00 \pm 0.00$ \\
       0.75 & \xmark & \xmark & $\textbf{0.99} \pm 0.01$ & $0.00 \pm 0.00$ \\
       0.75 & \cmark & \xmark & $\textbf{0.99} \pm 0.00$ & $0.52 \pm 0.09$ \\
       0.75 & \xmark & \cmark & $\textbf{1.00} \pm 0.00$ & $\textbf{0.79} \pm 0.04$ \\
       0.75 & \cmark & \cmark & $\textbf{0.99} \pm 0.01$ & $\textbf{0.81} \pm 0.04$ \\
        \bottomrule
      \end{tabular}
    }  %
    \caption{Original setting}
  \label{subtab:numerical_original}
  \end{subtable}
  \begin{subtable}{.54\textwidth}
    \centering
    \resizebox{!}{1.2cm}{%
      \small
      \begin{tabular}{cccccc}
        \toprule
        \multicolumn{3}{c}{\textbf{Generative process}} & \multicolumn{3}{c}{$\bm{R^2}$ \textbf{(nonlinear)}}  \\
        \cmidrule(r){1-3}\cmidrule(r){4-6}
        \textbf{p(chg.)} & \textbf{Stat.} & \textbf{Cau.} & \textbf{Content $\c$} & \textbf{Style $\s$} & \textbf{Modality $\m_i$}\\
        \midrule
       1.0 & \xmark & \xmark  & $\textbf{0.99} \pm 0.00$ & $0.00 \pm 0.00$ & $0.00 \pm 0.00$ \\
       0.75 & \xmark & \xmark & $\textbf{1.00} \pm 0.00$ & $0.00 \pm 0.00$ & $0.00 \pm 0.00$ \\
       0.75 & \cmark & \xmark & $\textbf{0.95} \pm 0.01$ & $0.56 \pm 0.23$ & $0.00 \pm 0.00$ \\
       0.75 & \xmark & \cmark & $\textbf{0.98} \pm 0.00$ & $\textbf{0.87} \pm 0.04$ & $0.00 \pm 0.00$ \\
       0.75 & \cmark & \cmark & $\textbf{0.95} \pm 0.03$ & $\textbf{0.89} \pm 0.07$ & $0.00 \pm 0.00$ \\
        \bottomrule
      \end{tabular}
    }  %
    \caption{Multimodal setting}
  \label{subtab:numerical_multimodal}
  \end{subtable}
  \caption{%
    Results of the numerical simulations. We compare the original setting ($\f_1
    = \f_2$, left table) with the multimodal setting ($\f_1 \not= \f_2$, right
    table). Only the multimodal setting includes modality-specific latent
    variables.  Each row presents the results of a different setup with varying
    style-change probability p(chg.) and possible statistical (Stat.) and/or
    causal (Caus.) dependencies. Each value denotes the $R^2$ coefficient of
    determination (averaged across 3 seeds) for a nonlinear regression model that
    predicts the respective ground truth factor ($\c, \s$, or $\m_i$) from the
    learned representation.
  }
\label{tab:numerical_m1m2}
\end{table}

We extend the numerical simulation from \citet{Kuegelgen2021} and implement the
multimodal setting using modality-specific mixing functions ($\f_1 \not= \f_2$)
with modality-specific latent variables. The numerical simulation allows us to
measure identifiability with full control over the generative process.  The
data generation is consistent with the generative process described in
\Cref{sec:generative_process}.  We sample $\c \sim \mathcal{N}(0,
\Sigma_{\c})$, $\m_i \sim \mathcal{N}(0, \Sigma_{\m_i})$, and $\s \sim
\mathcal{N}(\a + B\c, \Sigma_{\s})$.  Statistical dependencies within blocks
(e.g., among components of $\c$) are induced by non-zero off-diagonal entries
in the corresponding covariance matrix (e.g., in $\Sigma_{\c}$).  To induce a
causal dependence from content to style, we set $a_i, B_{ij} \sim
\mathcal{N}(0, 1)$; otherwise, we set $a_i, B_{ij} = 0$.  For style changes,
Gaussian noise is added with probability $\pi$ independently for each style
dimension: $\ts_i = \s_i + \epsilon$, where $\epsilon \sim \mathcal{N}(0,
\Sigma_{\epsilon})$ with probability $\pi$.  We generate the observations $\x_1
= \f_1(\c, \s, \m_1)$ and  $\x_2 = \f_2(\c, \ts, \m_2)$ using two
\emph{distinct} nonlinear mixing functions, i.e, for each $i \in \{1, 2\}$,
$\f_i: \mathbb{R}^d \to \mathbb{R}^d$ is a separate, invertible 3-layer~MLP
with LeakyReLU activations.  We train the encoders for 300,000 iterations using
the symmetrized InfoNCE objective (Equation~\ref{eq:sym-info-nce-def}) and the
hyperparameters listed in \Cref{sec:app-details-to-experimental-setting}.  We
evaluate block-identifiability by predicting the ground truth factors from the
learned representation using kernel ridge regression and report the $R^2$
coefficient of determination on holdout data.

\paragraph{Results}
We compare the original setting ($\f_1 = \f_2$,
\Cref{subtab:numerical_original}) with the multimodal setting ($\f_1 \not=
\f_2$, \Cref{subtab:numerical_multimodal}) and find that content can be
block-identified in \emph{both} settings, as the $R^2$ score is close to one
for the prediction of content, and quasi-random for the prediction of style and
modality-specific information.  Consistent with previous work, we observe that
some style information can be predicted when there are statistical and/or
causal dependencies; this is expected because statistical dependencies decrease
the effective dimensionality of content, while the causal dependence $\c \to
\s$ makes style partially predictable from the encoded content information.
Overall, the results of the numerical simulation are consistent with our
theoretical result from \Cref{th:main}, showing that contrastive learning can
block-identify content in the multimodal setting.

\subsection{Image/text pairs}
\label{subsec:imagetext_experiment}

Next, we test whether block-identifiability holds in a more realistic setting
with image/text pairs---two complex modalities with distinct generating
mechanisms. We extend the \emph{Causal3DIdent} dataset
\citep{Kuegelgen2021,Zimmermann2021}, which allows us to measure and control
the ground truth latent factors used to generate complex observations.  We use
\textit{Blender} \citep{Blender} to render high-dimensional images that depict
a scene with a colored object illuminated by a differently colored spotlight
and positioned in front of a colored background.  The scene is defined by 11
latent factors: the shape of the object (7 classes), position of the object
($x, y, z$ coordinates), orientation of the object ($\alpha, \beta, \gamma$
angles), position of the spotlight ($\theta$ angle), as well as the color of
the object, background, and spotlight respectively (one numerical value for
each).

\begin{wrapfigure}{r}{0.43\textwidth} 
\vspace{-18pt}
  \begin{center}
    \includegraphics[width=0.43\textwidth]{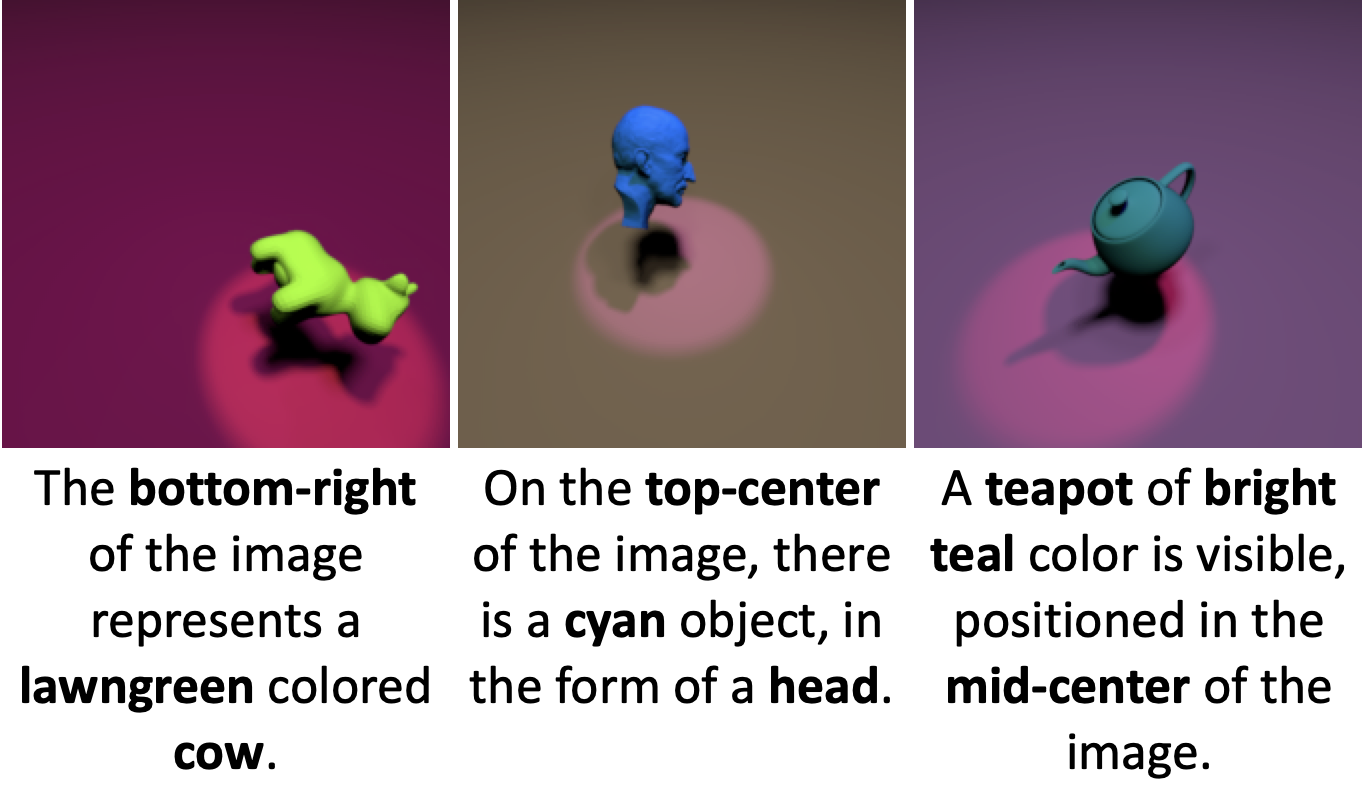}
    \vspace{-15pt}
    \caption{Examples of image/text pairs.}
    \label{fig:imagetext_examples}
  \end{center}
\vspace{-13pt}
\end{wrapfigure} 

\paragraph{\emph{Multimodal3DIdent}}
We extend the \emph{Causal3DIdent} dataset to the multimodal setting as
follows. We generate textual descriptions from the latent factors by adapting
the text rendering from the \emph{CLEVR} dataset \citep{Johnson2017}. Each
image/text pair shares information about the shape of the object (cow,
teapot,~etc.) and its position in the scene (e.g.,~bottom-right). For each
position factor, we use three clearly discernable values (top/center/bottom;
left/center/right), which can be described in text more naturally than
coordinates.  While shape and position are always shared (i.e., content)
between the paired image and text, the color of the object is causally
influenced by position and is stochastically shared (i.e., style). For the
object color, we use a continuous hue value, whereas for the text we match the
RGB value with the nearest value from a given palette (i.e., a list of named
colors, such as brown, beige, olive, etc.). The color palette is randomly
sampled from a set of three palettes to ensure the object color depicted in the
image does not uniquely determine the color described in the text. As
modality-specific factors for the images, we have object rotation, spotlight
position, and background color, while for the textual descriptions, we follow
\citet{Johnson2017} and use 5 different types of phrases to introduce
modality-specific variation. Examples of image/text pairs are shown in
\Cref{fig:imagetext_examples}. Further details about the dataset are provided
in \Cref{sec:app-details-to-experimental-setting}.

We train the encoders for 100,000 iterations using the symmetrized InfoNCE
objective (Equation~\ref{eq:sym-info-nce-def}) and the hyperparameters listed
in \Cref{sec:app-details-to-experimental-setting}. For the image encoder we use
a ResNet-18 architecture \citep{He2016} and for the text we use a convolutional
network. As for the numerical simulation, we evaluate block-identifiability by
predicting the ground truth factors from the learned representation. For
continuous factors, we  use kernel ridge regression and report the $R^2$ score,
whereas for discrete factors we report the classification accuracy of an MLP
with a single~hidden~layer. 

\paragraph{Results}
\Cref{fig:imagetext_results} presents the results on \emph{Multimodal3DIdent}
with a dimensionality ablation, where we vary the size of the encoding of the
model. Content factors (object position and shape) are always encoded well,
unless the encoding size is too small (i.e., smaller than 3-4 dimensions). When
there is sufficient capacity, style information (object color) is also encoded,
partly because there is a causal dependence from content to style and partly
because of the excess capacity, as already observed in previous work.
Image-specific information (object rotation, spotlight position, background
color) is mostly discarded, independent of the encoding size. Text-specific
information (phrasing) is encoded to a moderate degree (48--80\% accuracy),
which we attribute to the fact that phrasing is a discrete factor that violates
the assumption of continuous latents. This hints at possible limitations in the
presence of discrete latent factors, which we further investigate in
\Cref{app:additional_experiments} and discuss in \Cref{sec:discussion}.
Overall, our results suggest that contrastive learning can block-identify
content factors in a complex multimodal setting with image/text pairs.

\begin{figure*}[t]
    \centering
    \begin{subfigure}[t]{0.49\textwidth}
        \includegraphics[width=1.0\textwidth]{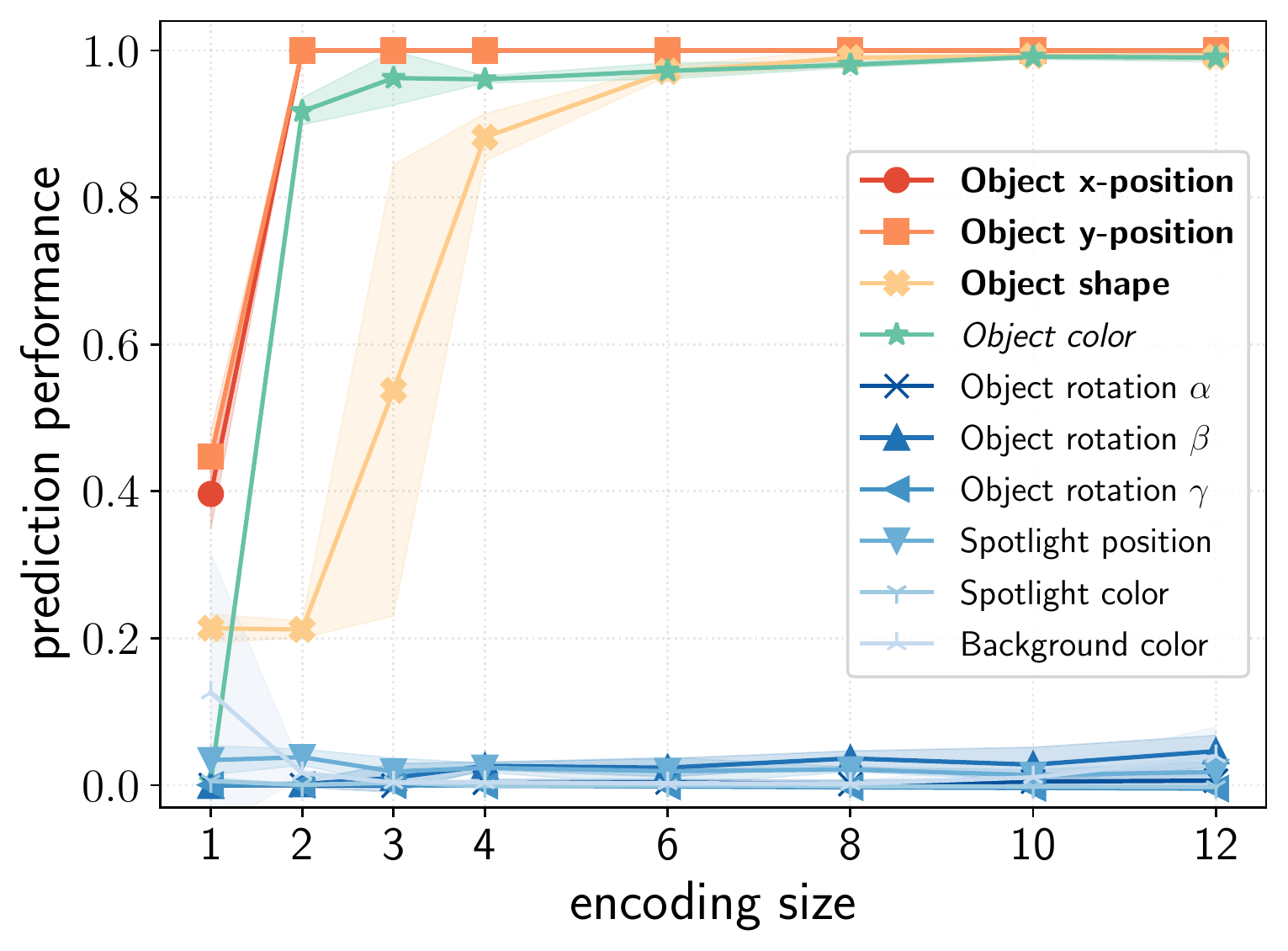}
        \caption{Prediction of image factors}
        \label{subfig:imagetext_results_image}
    \end{subfigure}%
    \hfill
    \begin{subfigure}[t]{0.49\textwidth}
        \centering
        \includegraphics[width=1.0\textwidth]{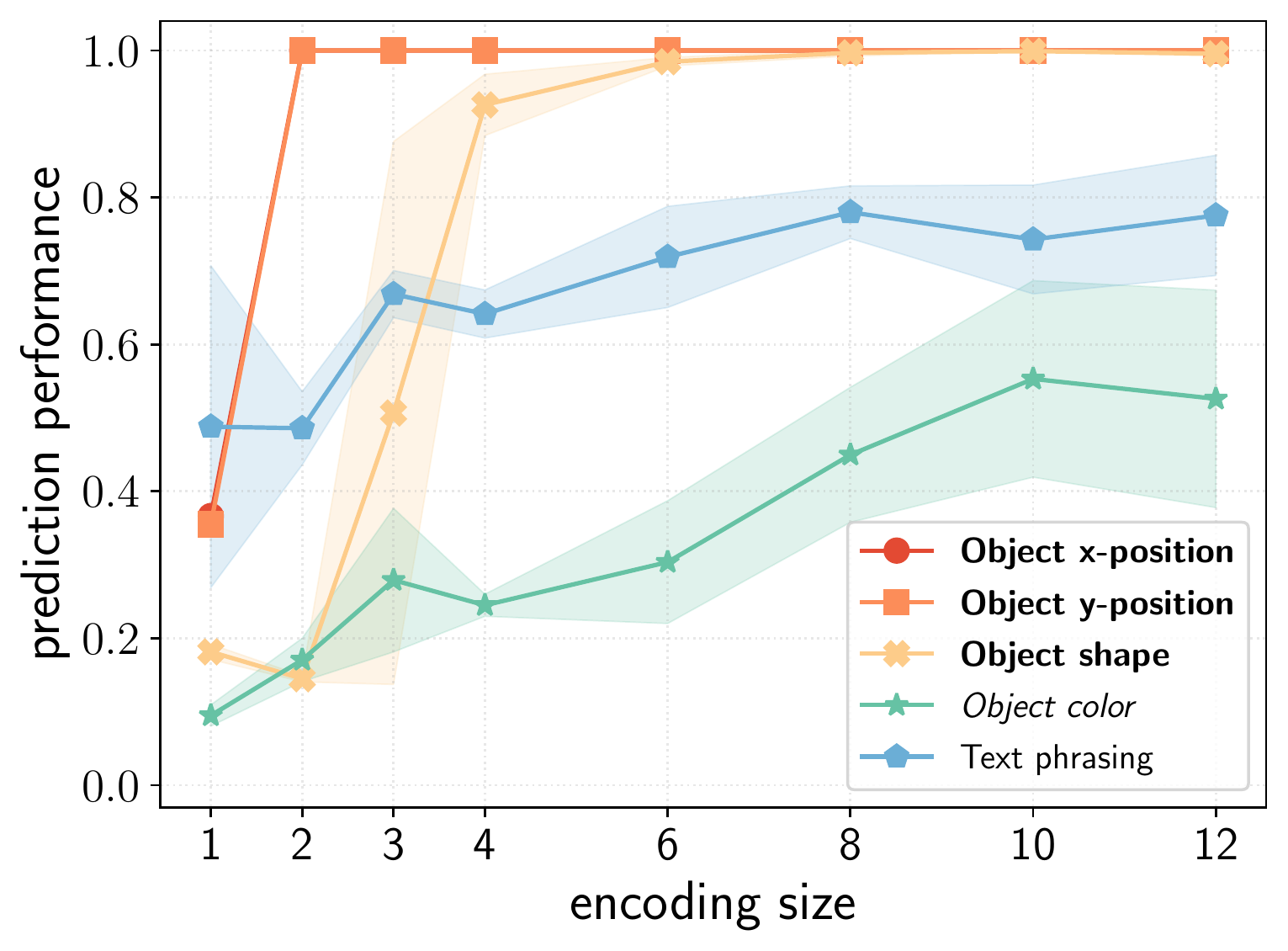}
        \caption{Prediction of text factors}
        \label{subfig:imagetext_results_text}
    \end{subfigure}
    \caption{Results on \emph{Multimodal3DIdent} as a function of the encoding
      size of the model. We assess the nonlinear prediction of ground truth
      image factors (left subplot) and text factors (right subplot) to quantify
      how well the learned representation encodes the respective factors.
      Content factors are denoted in bold and style factors in italic. Along
      the x-axis, we vary the encoding size, i.e., the output dimensionality of
      the model. We measure the prediction performance in terms of the $R^2$
      coefficient of determination for continuous factors and classification
      accuracy for discrete factors respectively. Each point denotes the
      average across three seeds and bands show one standard deviation. 
    }
\label{fig:imagetext_results}
\end{figure*}

\section{Related work}
\label{sec:related_work}

\paragraph{Multi-view nonlinear ICA}
The goal of multi-view nonlinear ICA is to identify latent factors shared
between different views, as described in
\Cref{subsec:preliminaries_identifiability}.  There is a thread of works
\citep{Gresele2019,Locatello2020} that recover the latent variable up to a
component-wise indeterminacy in a setting with mutually independent latent
components, or up to block-wise inderterminacies in the case of independent
groups of shared and view-specific components \citep{Lyu2020,Lyu2022}.  Beyond
the assumption of independent (groups of) components, there is a line of works
\citep{Kuegelgen2021,Kong2022} that partition the latent space into blocks of
invariant and blocks of changing components and show that the invariant
components can be identified up to a block-wise indeterminacy, even when there
are nontrivial dependencies between latent components.  Our work advances in
this direction and considers heterogeneous modalities with nontrivial
statistical and causal dependencies between latents. We prove that shared
factors can be block-identified in a novel setting with modality-specific
mixing functions and modality-specific latent variables.

\paragraph{Multimodal representation learning}
Multimodal representation learning seeks to integrate information from
heterogeneous sources into a joint representation
\citep{Baltrusaitis2019,Guo2019}.  There is a myriad of methods designed to
learn representations of multimodal data either directly or indirectly. Among
methods that learn representations indirectly, there are multimodal
autoencoders \citep{Ngiam2011,Geng2022,Bachmann2022,Aghajanyan2022} and a large
variety of multimodal generative models
\citep[e.g.,][]{Suzuki2016,Wu2018,Shi2019,Huang2018,Tsai2019_transformer,Ramesh2021}
that learn representations by backpropagation of different forms of
reconstruction and/or masked prediction error.  A more direct approach is taken
by decoder-free methods that maximize the similarity between the encodings of
different modalities. This class of methods includes nonlinear canonical
correlation analysis
\citep{Akaho2001,Bach2002,Andrew2013,Wang2016,Tang2017,Karami2021} as well as
multi-view and multimodal contrastive learning
\citep{Tian2019_cmc,Bachmann2019,Federici2020,Tsai2021,Radford2021,Poklukar2022}.
While all of the named methods aim to integrate information across modalities,
they do not answer the underlying question of identifiability, which our work
seeks to address.

\section{Discussion}
\label{sec:discussion}

\paragraph{Implications and scope}
We have shown that contrastive learning can block-identify shared factors in
the multimodal setting. Numerical simulations
(\Cref{subsec:numerical_experiment}) verify our main theoretical result
(\Cref{th:main}), showing that contrastive learning block-identifies content
information (\Cref{def:block-identified}), when the size of the encoding
matches the number of content factors.  Experiments on a complex dataset of
image/text pairs corroborate that contrastive learning can isolate content in a
more realistic setting and even under some violations of the assumptions
underlying \Cref{th:main}. In \Cref{app:additional_experiments}, we include
further experiments that test violations with discrete factors and
dimensionality ablations that examine the robustness and sample complexity. %
More generally, we observe that contrastive learning encodes invariant
information (i.e., content) very well across all settings.  When there is
sufficient capacity, stochastically shared information (i.e., style) is encoded
to a moderate degree, but without affecting the prediction of invariant
information.  For practice, our results suggest that contrastive learning
without capacity constraints can encode \emph{any} shared factor, irregardless
of whether the factor is truly invariant across modalities or if its effect on
the observations is confounded by noise or other factors.  This is in line with
the information-theoretic view \citep{Oord2018,Poole2019}, i.e., that
contrastive learning maximizes the mutual information between
representations---a measure of mutual dependence that quantifies \emph{any}
information that is shared. Our results demonstrate that the size of the
encoding can be reduced to learn a representation that recovers invariant
information, as captured by the notion of block-identifiability. In practice,
this can be leveraged for representation learning in settings of
content-preserving distribution shifts \citep{Mitrovic2021,Federici2021}, where
information relevant for a downstream task~remains~unchanged.

\paragraph{Limitations and outlook}
First, \Cref{th:main} suggests that only \emph{invariant} factors can be
block-identified. However, in practice, there can be pairs of observations for
which the invariance is inadvertently violated, e.g., due to measurement
errors, occlusions, or other mistakes in the data collection.  On the one hand,
such a violation can be viewed as a mere artifact of the data collection and
could be managed via interventions on the generative process, e.g., actions in
reinforcement learning  \citep{Lippe2022,Brehmer2022,Ahuja2022,Lachapelle2022}.
On the other hand, violations of the content-invariance blur the line between
content and style factors and it would be interesting to study identifiability
in a more general setting with \emph{only} stochastically shared factors.
Second, \Cref{th:main} assumes that the number of content factors is known or
that it can be estimated. In practice, this might not be a significant
limitation, since the number of content factors can be viewed as a single
hyperparameter \citep[e.g.,][]{Locatello2020}, though the design of suitable
heuristics is an interesting research direction. We explore the idea of
estimating the number of content factors in \Cref{app:additional_experiments}
\Cref{fig:model_selection}. Third, \Cref{th:main} assumes that all latent
factors are continuous. While this assumption prevails in related work
\citep{Hyvaerinen1999,Hyvarinen2016,Hyvaerinen2019,Gresele2019,Locatello2019,Locatello2020,Zimmermann2021,Kuegelgen2021,Klindt2021},
our results in \Cref{subfig:imagetext_results_text} indicate that in the
presence of discrete factors, some style or modality-specific information can
be encoded.  In \Cref{app:additional_experiments} \Cref{fig:discrete}, we
provide numerical simulations that support these findings.  Finally, our model
can be extended to more than two modalities---a setting for which there are
intriguing identifiability results \citep{Gresele2019,Schoelkopf2016} as well
as suitable learning objectives \citep{Tian2019_cmc,Lyu2022}. Summarizing, the
described limitations mirror the assumptions on the generative process
(\Cref{sec:generative_process}), which may be relaxed in future work.

\section{Conclusion}

We addressed the problem of identifiability for multimodal representation
learning and showed that contrastive learning can block-identify latent factors
shared between heterogeneous modalities. We formalize the multimodal generative
process as a novel latent variable model with modality-specific generative
mechanisms and nontrivial statistical and causal dependencies between latents.
We prove that contrastive learning can identify shared latent factors up to a
block-wise indeterminacy and therefore isolate invariances between modalities
from other changeable factors.  Our theoretical results are corroborated by
numerical simulations and on a complex multimodal dataset of image/text pairs.
More generally, we believe that our work will help in shaping a theoretical
foundation for multimodal representation learning and that further relaxations
of the presented generative process offer rich opportunities for future work.

\newpage
\section*{Acknowledgements}

ID was supported by the SNSF grant \textit{\#200021-188466}. EP was supported
by the grant \textit{\#2021-911} of the Strategic Focal Area ``Personalized
Health and Related Technologies (PHRT)'' of the ETH Domain (Swiss Federal
Institutes of Technology). Experiments were performed on the ETH~Zurich
Leonhard cluster. Special thanks to Kieran~Chin-Cheong for his support in the
early stages of the project as well as to Luigi~Gresele and Julius~von~Kügelgen
for helpful discussions.

\section*{Reproducibility statement}

For our theoretical statements, we provide detailed derivations and state the
necessary assumptions. The generative process is specified in
\Cref{sec:generative_process} and the assumptions for block-identifiability are
referenced in \Cref{th:main}. We test violations of the key assumptions with
suitable experiments (dimensionality ablations; discrete latent factors) and
discuss the limitations of our work in \Cref{sec:discussion}. Further, we
empirically verify our theoretical results with numerical simulations and on
complex multimodal data. To ensure empirical reproducibility, the results of
every experiment were averaged over multiple seeds and are reported with
standard deviations. Information about implementation details, hyperparameter
settings, and evaluation metrics are included in
\Cref{sec:app-details-to-experimental-setting}. Additionally, we publish the
code to reproduce the experiments.

\bibliographystyle{apalike}
\bibliography{references}

\newpage
\appendix

\section[Appendix A: Theory]{Theory}
\label{app:proofs}

\subsection{Proof of Theorem 1}
\label{app:proof_of_thmain}

\thmain*

\begin{proof}
To prove Theorem~\ref{th:main}, we follow the proof structure from
\citet[][Theorem~4.4]{Kuegelgen2021} and divide the proof into three steps.
First, we show that there exists a pair of smooth functions 
$\g_1^*, \g_2^*$ that attain the global minimum of $\SymAME$
(Eq.~\refeq{eq:sym-info-nce-estimand-def}). Further, in
Equations~(\refeq{eq:invariance-rel-step1}--\refeq{eq:hh2-unif}), we derive
invariance conditions that have to hold almost surely for any pair of smooth
functions $\g_1, \g_2$ attaining the global minimum of
Eq.~(\refeq{eq:sym-info-nce-estimand-def}).  In Step~2, we use the invariance
conditions derived in Step~1 to show by contradiction that \emph{any} pair of
smooth functions $\g_1, \g_2$ that attain the global minimum in
Eq.~(\refeq{eq:sym-info-nce-estimand-def}) can only depend on content and not
on style or modality-specific information.  In the third and final step, for
$\h_1 := \g_1 \circ \f_1$ and $\h_2 := \g_2 \circ \f_2$, we show that both
functions must be bijections and hence that $\c$ is block-identified by $\g_1$
and $\g_2$ respectively.

\paragraph{Step 1.} 
Recall the asymptotic form of the objective, as defined in
\Cref{eq:sym-info-nce-estimand-def}:
\footnotesize
\begin{equation}%
  \SymAME(\g_1,\g_2) = \mathbb{E}_{(\x_1,\x_2) \sim p_{\x_1,\x_2}} 
  \left[ \lVert \g_1(\x_1) - \g_2(\x_2) \rVert_2 \right] - \nicefrac{1}{2} 
  \left( H(\g_1(\x_1)) + H(\g_2(\x_2)) \right) \;. 
  \tag{\ref{eq:sym-info-nce-estimand-def}}
\end{equation}
\normalsize
The global minimum of $\SymAME$ is reached when the first term is minimized and
the second term is maximized. The first term is minimized when the encoders
$\g_1$ and $\g_2$ are perfectly aligned, i.e., when $\g_1(\x_1) = \g_2(\x_2)$
holds for all pairs $(\x_1, \x_2) \sim p_{\x_1, \x_2}$. The second term attains
its maximum when $\g_1$ and $\g_2$ map to a uniformly distributed random
variable on $(0,1)^{n_c}$ respectively.%
\footnote{%
  Note that we restrict the range of $\g_1$ and $\g_2$ to $(0,1)^{n_c}$ by
  definition merely to simplify the notation. Generally, the uniform
  distribution  $\mathcal{U}(a, b)$ is the maximum entropy distribution on the
  interval $[a, b]$.
}

To show that there \emph{exists} a pair of functions that minimize $\SymAME$,
let $\g_1^* := \bm{d}_1 \circ \f_{1,1:n_c}^{-1}$ and let $\g_2^* := \bm{d}_2
\circ \f_{2,1:n_c}^{-1}$, where the subscript $1{:}n_c$ indexes the subset of
content dimensions w.l.o.g.~and where $\bm{d}_1$ and $\bm{d}_2$ will be defined
using the Darmois construction~\citep{darmois:51:analyse,Hyvaerinen1999}.
First, recall that ${\f_1^{-1}(\x_1)_{1:n_c} = \c}$ and that
${\f_2^{-1}(\x_2)_{1:n_c} = \tc}$ by definition. Second, for 
$i \in \{ 1, 2 \}$, let us define $\bm{d}_i: \mathcal{C} \mapsto (0,1)^{n_c}$
using the Darmois construction, such that $\bm{d}_i$ maps $\c$ and $\tc$ to a
uniform random variable respectively. It follows that $\g_1^*, \g_2^*$ are
smooth functions, because any function $\bm{d}_i$ obtained via the Darmois
construction is smooth and $\f_1^{-1}, \f_2^{-1}$ are smooth as well (each
being the inverse of a smooth function).

Next, we show that the pair of functions $\g_1^*, \g_2^*$, as defined above,
attains the global minimum of the objective $\SymAME$. We have that
\footnotesize
\begin{align}
	\SymAME(\g_1^*,\g_2^*) &= \mathbb{E}_{(\x_1,\x_2) \sim p_{\x_1,\x_2}} 
    \left[ \lVert \g_1^*(\x_1) - \g_2^*(\x_2) \rVert_2 \right] - \nicefrac{1}{2} 
    \left( H(\g_1^*(\x_1)) + H(\g_2^*(\x_2)) \right) \\
	&= \mathbb{E}_{(\x_1,\x_2) \sim p_{\x_1,\x_2}} 
    \left[ \lVert \bm{d}_1(\c) - \bm{d}_2(\tc) \rVert_2 \right] - \nicefrac{1}{2} 
    \left( H(\bm{d}_1(\c)) + H(\bm{d}_2(\tc)) \right) \\
	&= 0 
\end{align}
\normalsize
where by Assumption~\ref{as:content}, $\c = \tc$ almost surely, which implies
that the first term is zero almost surely. Further, $\bm{d}_i$ maps $\c, \tc$
to uniformly distributed random variables on $(0,1)^{n_c}$, which implies that
the differential entropy of $\bm{d}_1(\c)$ and $\bm{d}_2(\tc)$ is zero, as
well. Consequently, there exists a pair of functions $\g_1^*, \g_2^*$ that
minimizes $\SymAME$.

Next, let $\g_1: \mathcal{X}_1 \mapsto (0,1)^{n_c}$ and  $\g_2: \mathcal{X}_2
\mapsto (0,1)^{n_c}$ be \emph{any} pair of smooth functions that attains the
global minimum of Eq.~(\refeq{eq:sym-info-nce-estimand-def}), i.e.,
\footnotesize
\begin{equation}
  \label{eq:any-minimizer-zero}
  \SymAME(\g_1, \g_2) = \mathbb{E}_{(\x_1,\x_2) \sim p_{\x_1,\x_2}} 
  \left[ \lVert \g_1(\x_1) - \g_2(\x_2) \rVert_2 \right] - \nicefrac{1}{2} 
  \left( H(\g_1(\x_1)) + H(\g_2(\x_2)) \right) = 0 \;. 
\end{equation}
\normalsize
Let $\h_1 := \g_1 \circ \f_1$ and $\h_2 := \g_2 \circ \f_2$, and notice that
both are smooth functions since all involved functions are smooth by
definition. Since \Cref{eq:any-minimizer-zero} is a global minimum, it implies
the following invariance conditions for the individual terms:
\footnotesize
\begin{align}
	\mathbb{E}_{(\x_1,\x_2) \sim p_{\x_1,\x_2}} \left[ \lVert \h_1(\z_1) - \h_2(\z_2) \rVert_2 \right] 
    &= 0 \label{eq:invariance-rel-step1} \\
	H(\h_1(\z_2)) &= 0 \label{eq:hh1-unif} \\
	H(\h_2(\z_2)) &= 0 \label{eq:hh2-unif}
\end{align}
\normalsize
Hence, $\h_1(\z_1) = \h_2(\z_2)$ must hold almost surely 
w.r.t.~$p_{\x_1, \x_2}$. Additionally, \Cref{eq:hh1-unif} (resp.
\Cref{eq:hh2-unif}) implies that $\hat{\c}_1 = \h_1(\z_1)$ 
(resp. $\hat{\c}_2 = \h_2(\z_2)$) must be uniform on $(0,1)^{n_c}$.

\paragraph{Step 2.} 
Next, we show that any pair of functions that minimize $\SymAME$ depend only on
content information. Since style is independent of $\m_1$ and $\m_2$, we first
show that $\h_1(\z_1)$ does not depend on $\m_1$, and that $\h_2(\z_2)$ does
not depend on $\m_2$. We then show that $\h_1$ and $\h_2$ also cannot depend on
style, based on a result from previous work.

First note, that we can exclude all degenerate solutions where $\g_1$ maps a
component of $\m_1$ to a constant, since $\g_1$ would not be invertible anymore
and such a solution would violate the invariance in Eq.~(\refeq{eq:hh1-unif}).
To prove a contradiction, suppose that, w.l.o.g., 
$\h_1(\c,\s, \m_1)_{1:n_{c}} := \h_1(\z_1)_{1:n_{c}}$ depends on some component
in $\m_1$ in the sense that the partial derivative of $\h_1(\z_1)_{1:n_{c}}$
w.r.t.~some modality-specific variable $m_{1,l}$ is non-zero for some point
$(\c^*, \s^*, \m_1^*) \in \mathcal{Z}_1$. Specifically, it implies that the
partial derivative $\nicefrac{\partial \h_1(\z_1)_{1:n_{c}}}{\partial m_{1,l}}$
is positive in a neighborhood around $(\c^*, \s^*, \m_1^*)$, which is a
non-empty open set, since $\h_1$ is smooth. On the other hand, due to the
independence of $\z_2$ and $\m_1$, the fact that $\h_2(\z_2)_{1:n_c}$ cannot
not depend on $\m_1$, and that $p(\z) > 0$ almost everywhere, we come to a
contradiction.  That is, there exists an open set of points with positive
measure, namely the neighbourhood around $(\c^*, \s^*, \m_1^*)$, on which
\footnotesize
\begin{equation}
  | (\h_1(\z_1)_{1:n_c} - \h_2(\z_2)_{1:n_c}) | > 0
\end{equation}
\normalsize
almost surely, which contradicts the invariance in \Cref{eq:invariance-rel-step1}.
The statement does not change, if we add further dependencies of $\h_1$ on
components of $\m_1$, or for $\h_2$ on components of $\m_2$, because $\m_1$ and
$\z_2$ are independent, and $\m_2$ and $\z_1$ are independent as well. Hence,
we show that \emph{any} encoder that minimizes the objective in
\Cref{eq:sym-info-nce-estimand-def} cannot depend on modality-specific
information.

Having established that neither $\h_1(\z_1)_{1:n_c}$, nor $\h_2(\z_2)_{1:n_c}$
can depend on modality-specific information, it remains to show that style
information is not encoded, as well. Leveraging \Cref{as:style}, we can show
that the strict inequality in \Cref{eq:invariance-rel-step1} has a positve
density if $\h_1(\z_1)_{1:n_c}$ or $\h_2(\z_2)_{1:n_c}$ was dependent on a
dimension in $\s$ respectively $\ts$, which would again lead to a violation of
the invariance derived in \Cref{eq:invariance-rel-step1}, as shown
in~\citet[][Proof of Theorem~4.2]{Kuegelgen2021}.

\paragraph{Step 3.} 
It remains to show that $\h_1, \h_2$ are bijections. We know that $\mathcal{C}$
and $(0,1)^{n_c}$ are simply connected and oriented $C^1$ manifolds, and we
have established in Step~1 that $\h_1$ and $\h_2$ are smooth and hence
differentiable functions. Since $p_\c$ is a regular density, the uniform
distributions w.r.t.~the pushthrough functions $\h_1$ and $\h_2$ are regular
densities. Thus, $\h_1$ and $\h_2$ are bijections~\citep[Proposition~5]{Zimmermann2021}

Step 3 concludes the proof. We have shown that for any pair of smooth functions
$\g_1, \g_2$ that attain the global minimum of
Eq.~(\refeq{eq:sym-info-nce-estimand-def}), we have that $\c$ is
\emph{block-identified} (Def.~\refeq{def:block-identified}) by $\g_1$ and
$\g_2$.
\end{proof}

\subsection{Symmetric generative process}
\label{app:symmetric_generative_process}

Throughout the main body of the paper, we described an asymmetric generating
mechanism, where $\z_2$ is a perturbed version of $\z_1$. Here, we will briefly
sketch out how our model and results can be adapted to a symmetric setting,
where \emph{both} $\z_1$ and $\z_2$ are generated as perturbations of $\z$.

Concretely, we would need to make small adjustments to
Assumptions~\ref{as:content} and \ref{as:style} as follows. We start with the
content invariance in \Cref{as:content}, which specifies how 
$\z_1 = (\tc_1, \ts_1, \tm_1)$ and $\z_2 = (\tc_2, \ts_2, \tm_2)$ are generated.

Let $i \in \{ 1,2 \} $. The conditional density $p_{\z_i|\z}$ over
$\mathcal{Z}_i \times \mathcal{Z}$ takes the form
\begin{equation} \label{eq:zi-given-z}
  p_{\z_i|\z}(\z_i|\z) = \delta(\tc_i - \c) \delta(\tm_i - \m_i) p_{\ts_i|\s}(\ts_i|\s) \; ,
\end{equation}
where $\delta( \cdot )$ is the Dirac delta function, i.e., $\tc_i = \c$ almost
everywhere, as well as $\tm_i = \m_i$ almost everywhere. Note that since 
$\tc_1 = \c$ a.e.~and $\c = \tc_2$ a.e.,~it follows that $\tc_1 = \tc_2$ almost
everywhere, which is a property that is needed in Step~1 in the proof of
\Cref{th:main}. In addition, it still holds that $\tm_i \Indep \z_j$, for $i, j
\in \{ 1,2 \}$ and $i \neq j$, which is needed in Step 2 of the proof to show
that modality-specific information is not encoded.

Lastly, we need to revisit \Cref{as:style}, for which both $\ts_1$ and $\ts_2$
would be generated through perturbations of $\s$ via the conditional
distribution $p_{\ts_i|\s}$ on $\mathcal{S} \times \mathcal{S}$, as described
in \Cref{as:style}, for each $i$ individually. As a small technical nuance, we
would need to specify the conditional generation of the perturbed style
variables $\ts_1$ and $\ts_2$ such that they are not perturbed in an identical
manner w.r.t.~$\s$. This can be ensured by, e.g., constraining $p_A$
appropriately to exclude the degenerate case where dimensions in $\ts_1$ and
$\ts_2$ are perfectly aligned---a case that needs to be excluded for Step~2 of
the proof of \Cref{th:main}.

\section[Appendix B: Experiments]{Experiments}
\label{app:experiment_details}

\subsection{Experimental details}
\label{sec:app-details-to-experimental-setting}

\paragraph{Numerical simulation}
The generative process is described in \Cref{subsec:numerical_experiment}.
Here, we provide additional information about the experiment. The invertible
MLP is constructed similar to previous work
\citep{Hyvarinen2016,Hyvaerinen2017,Zimmermann2021,Kuegelgen2021} by resampling
square weight matrices until their condition number surpasses a threshold
value. For the original setting~($\f_1 = \f_2$), we use one encoder~($\g_1 = \g_2$),
whereas for the multimodal setting~(${\f_1 \not= \f_2}$), we use distinct
encoders~($\g_1 \not= \g_2$) to mirror the assumption of distinct mixing
functions and because, in practice, the dimensionality of the observations can
differ across modalities.  In \Cref{tab:numerical_details}, we specify the main
hyperparameters for the numerical simulation.

\paragraph{Multimodal3DIdent}
Our dataset of image/text pairs is based on the code used to generate the
\emph{Causal3DIdent} \citep{Kuegelgen2021,Zimmermann2021} and \emph{CLEVR}
\citep{Johnson2017} datasets.  Images are generated using the \emph{Blender}
renderer \citep{Blender}. The rendering serves as a complex mixing function
that generates the images from 11 different parameters (i.e., latent factors)
that are listed in \Cref{tab:imgtxt_latent_factors}. To generate textual
descriptions, we adapt the text rendering from \emph{CLEVR} \citep{Johnson2017}
and use 5 different phrases to induce modality-specific variation. The latent
factors used to generate the text are also listed in
\Cref{tab:imgtxt_latent_factors}. The dependence between the image and text
modality is determined by three content factors (object shape, x-position, and
y-position) and one style factor (object color). For the object color in the
image, we use a continuous hue value, whereas for the text we match the RGB
value with the nearest color value from one of three different
palettes$^\text{\ref{footnote:color_palettes}}$ that is sampled uniformly at
random for each observation. Further, we ensure that there are no overlapping
color values across palettes by using a prefix for the respective palette
(e.g., ``xkcd:black'') when necessary. In \Cref{subsec:imagetext_experiment},
we use a version of the \emph{Multimodal3DIdent} dataset with a causal
dependence from content to style. Specifically, the color of the object depends
on its x-position. In particular, we split the range of hue values $[0, 1]$
into three equally sized intervals and associate each of these intervals with a
fixed x-position of the object. For instance, if x-position is ``left'', we
sample the hue value from the interval $[0, \nicefrac{1}{3}]$. Consequently,
the color of the object can be predicted to some degree from the position of
the object.  Samples of image/text pairs from the \emph{Multimodal3DIdent}
dataset are shown in \Cref{fig:imagetext_examples,fig:multimodal3dident_app}.
The hyperparameters for the experiment are listed in \Cref{tab:imgtxt_details}.
In \Cref{app:additional_experiments}, we provide additional results for a
version of the dataset with mutually independent factors. 

\paragraph{High-dimensional image pairs} 
In \Cref{app:additional_experiments}, we provide additional results using a
dataset of high-dimensional pairs of images of size $224\times224\times3$. Similar to
\textit{Multimodal3DIdent}, images are generated using \textit{Blender}
\citep{Blender} and code adapted from previous work
\citep{Zimmermann2021,Kuegelgen2021}. Each image depicts a scene with one type
of object \citep[a teapot, like in][]{Zimmermann2021} in front of a colored
background and illuminated by a colored spotlight (for examples, see
\Cref{fig:m3DIdent0}). The scene is defined by 9 continuous latent variables
each of which is sampled from a uniform distribution. Object positions (x-, y-
and z-coordinates) are content factors that are always shared between
modalities, while object-, spotlight- and background-colors are style factors
that are stochastically shared. Modality-specific factors are object rotation
($\alpha$ and $\beta$ angles) for one modality and spotlight position for the
other. To simulate modality-specific mixing functions, we render the objects
using distinct textures (i.e., rubber and metallic) for each modality. Further,
we generate two versions of this dataset, with and without causal dependencies.
For the dataset with causal dependencies we sample the latent factors according
to a causal model, where background-color depends on z-position and
spotlight-color depends on object-color.  We use ResNet-18 encoders and similar
hyperparameter values to those used for the image/text experiment
(\Cref{tab:imgtxt_details}). 

\begin{table}[!ht]
\vspace{2em}
\centering
    \begin{subtable}{.5\linewidth}
    \centering
    \small{%
        \begin{tabular}{lr}
          \toprule
          \textbf{Parameter} & \textbf{Value} \\
          \midrule
          Generating function & 3-layer MLP\\
          Encoder & 7-layer MLP\\
          Optimizer & Adam \\  %
          Cond. threshold ratio & 1e-3 \\
          Dimensionality $d$ & 15\\
          Batch size & 6144 \\
          Learning rate & 1e-4 \\
          Temperature $\tau$ & 1.0 \\
          \# Seeds & 3 \\
          \# Iterations & 300,000 \\
          Similarity metric & Euclidian \\
          Gradient clipping & 2-norm; max value 2 \\
          \bottomrule
        \end{tabular}
        \caption{Parameters used for the numerical simulation.}
        \label{tab:numerical_details}
    } %
    \end{subtable}%
    \begin{subtable}{.5\linewidth}
    \small{%
        \begin{tabular}{lr}
          \toprule
          \textbf{Parameter} & \textbf{Value} \\
          \midrule
          Generating function & Image and text rendering\\
          Image encoder & ResNet-18\\
          Text encoder & 4-layer ConvNet\\
          Optimizer & Adam \\  %
          Batch size & 256 \\
          Learning rate & 1e-5 \\
          Temperature $\tau$ & 1.0 \\
          \# Seeds & 3 \\
          \# Iterations & 100,000 \\
          \# Samples (train\,/\,val\,/\,test) & 125,000\,/\,10,000\,/\,10,000 \\
          Similarity metric & Cosine similarity \\
          Gradient clipping & 2-norm; max value 2 \\
          \bottomrule
        \end{tabular}
        \caption{Parameters used for \emph{Multimodal3DIdent}.}
        \label{tab:imgtxt_details}
    } %
    \end{subtable}
    \caption{Experimental parameters and hyperparameters used for the two experiments in the main text.}
\end{table}

\begin{table}[!ht]
\vspace{2em}
\small{%
\centering
    \begin{tabular}{lrr}
      \toprule
      \textbf{Latent factor} & \textbf{Distribution} & \textbf{Details} \\
      \midrule
      \textbf{Object shape} & Categorical & 7 unique values \\
      \textbf{Object x-position} & Categorical & 3 unique values \\
      \textbf{Object y-position} & Categorical & 3 unique values \\
      \emph{Object color} & Uniform & hue value in $[0, 1]$ \\
      Object rotation $\alpha$ & Uniform & angle value in $[0, 1]$ \\
      Object rotation $\beta$ & Uniform & angle value in $[0, 1]$ \\
      Object rotation $\gamma$ & Uniform & angle value in $[0, 1]$ \\
      Spotlight position & Uniform & angle value in $[0, 1]$ \\
      Spotlight color & Uniform & hue value in $[0, 1]$ \\
      Background color & Uniform & hue value in $[0, 1]$ \\
      \midrule
      \textbf{Object shape} & Categorical & 7 unique values \\
      \textbf{Object x-position} & Categorical & 3 unique values \\
      \textbf{Object y-position} & Categorical & 3 unique values \\
      \emph{Object color} & Categorical & color names (3 palettes)\footnotemark \\  %
      Text phrasing & Categorical & 5 unique values \\  %
      \bottomrule
    \end{tabular}
    \caption{Description of the latent factors used to generate
      \emph{Multimodal3DIdent}. The first 10 factors are used to generate the
      images and the remaining 5 factors are used to generate the text.
      Object~z-position is kept constant for all images, which is why we do not list
      it among the generative factors. Independent factors are drawn uniformly from
      the respective distribution. Content factors are denoted in bold and style
      factors in italic; the remaining factors are modality-specific.
    }
\label{tab:imgtxt_latent_factors}
} %
\end{table}

\footnotetext{%
\label{footnote:color_palettes}
  We use the following three palettes from the \texttt{matplotlib.colors} API:
  Tableau colors (10 values), CSS4 colors (148 values), and XKCD colors (949
  values).
}

\newpage
\clearpage
\subsection{Additional experimental results}
\label{app:additional_experiments}

\begin{figure}[t]
    \vspace{-2.0em}
    \centering
    \begin{subfigure}[t]{1.0\textwidth}
        \includegraphics[width=1.0\textwidth]{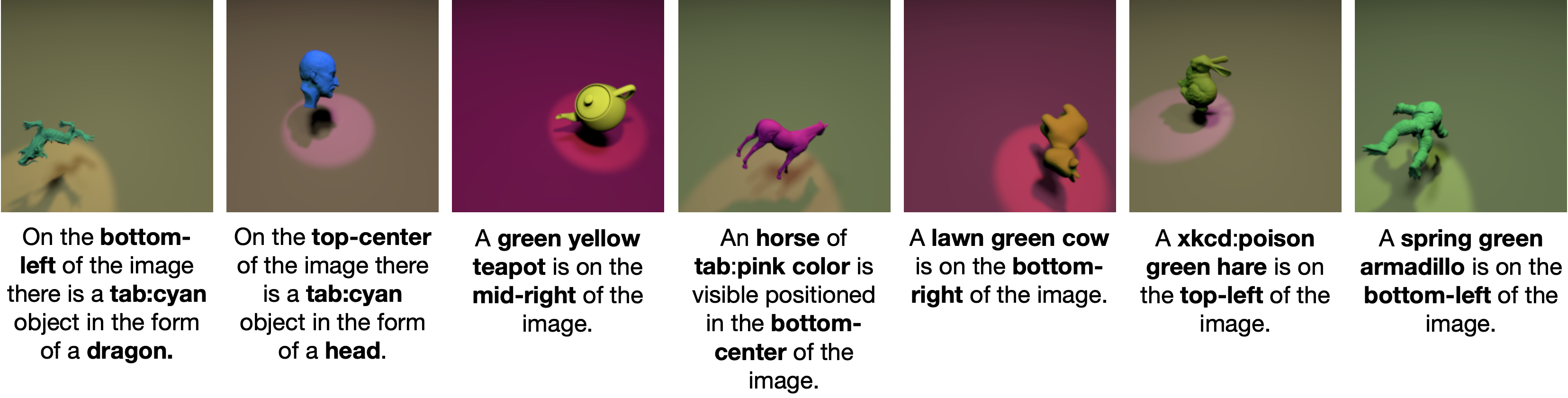}
        \vspace{-10pt}
    \end{subfigure}
    \vspace{-1.5em}
    \caption{Examples of image/text pairs from the \textit{Multimodal3DIdent}
      dataset. Each sample shows one of the seven shapes or classes of objects
      included in the dataset.
    }
\label{fig:multimodal3dident_app}
\end{figure}

\begin{table}[t]
\vspace{1.5em}
  \centering
  \begin{subtable}{.49\textwidth}
    \centering
    \resizebox{0.95\textwidth}{!}{%
      \small
      \begin{tabular}{ccccc}
        \toprule
        \multicolumn{3}{c}{\textbf{Generative process}} & \multicolumn{2}{c}{$\bm{R^2}$ \textbf{(nonlinear)}}  \\
        \cmidrule(r){1-3}\cmidrule(r){4-5}
        \textbf{p(chg.)} & \textbf{Stat.} & \textbf{Cau.} & \textbf{Content $\c$} & \textbf{Style $\s$} \\
        \midrule
       1.0 & \xmark & \xmark  & $\textbf{1.00} \pm 0.00$ & $0.00 \pm 0.00$ \\
       0.75 & \xmark & \xmark & $\textbf{0.99} \pm 0.01$ & $0.00 \pm 0.00$ \\
       0.75 & \cmark & \xmark & $\textbf{0.99} \pm 0.00$ & $0.52 \pm 0.09$ \\
       0.75 & \xmark & \cmark & $\textbf{1.00} \pm 0.00$ & $\textbf{0.79} \pm 0.04$ \\
       0.75 & \cmark & \cmark & $\textbf{0.99} \pm 0.01$ & $\textbf{0.81} \pm 0.04$ \\
        \bottomrule
      \end{tabular}
    }  %
    \caption{Original setting} %
    \label{subtab:numerical_ablation_nom1m2_vanilla}
  \end{subtable}
  \begin{subtable}{.49\textwidth}
    \centering
    \resizebox{0.95\textwidth}{!}{%
      \small
      \begin{tabular}{ccccc}
        \toprule
        \multicolumn{3}{c}{\textbf{Generative process}} & \multicolumn{2}{c}{$\bm{R^2}$ \textbf{(nonlinear)}}  \\
        \cmidrule(r){1-3}\cmidrule(r){4-5}
        \textbf{p(chg.)} & \textbf{Stat.} & \textbf{Cau.} & \textbf{Content $\c$} & \textbf{Style $\s$} \\
        \midrule
       1.0 & \xmark & \xmark  & $\textbf{1.00} \pm 0.00$ & $0.00 \pm 0.00$ \\
       0.75 & \xmark & \xmark & $\textbf{1.00} \pm 0.00$ & $0.00 \pm 0.00$ \\
       0.75 & \cmark & \xmark & $\textbf{0.99} \pm 0.01$ & $0.36 \pm 0.10$ \\
       0.75 & \xmark & \cmark & $\textbf{1.00} \pm 0.00$ & $\textbf{0.81} \pm 0.03$ \\
       0.75 & \cmark & \cmark & $\textbf{0.99} \pm 0.01$ & $\textbf{0.83} \pm 0.05$ \\
        \bottomrule
      \end{tabular}
    }  %
    \caption{Multimodal setting} %
  \label{subtab:numerical_ablation_nom1m2_multimodal}
  \end{subtable}
  \caption{%
    Results of the numerical simulations \emph{without} modality-specific latent
    variables. We compare the original setting ($\f_1 = \f_2$, left table) with
    the multimodal setting ($\f_1 \not= \f_2$, right table). Each row presents
    the results of a different setup with varying style-change probability
    p(chg.) and possible statistical (Stat.) and/or causal (Caus.) dependencies.
    Each value denotes the $R^2$ coefficient of determination (averaged across 3
    seeds) for a nonlinear regression model that predicts the respective ground
    truth factor ($\c, \s$, or $\m_i$) from the learned representation.
  }
\label{tab:numerical_ablation_nom1m2}
\end{table}

\paragraph{Numerical simulation without modality-specific latents}
Recall that the considered generative process (\Cref{sec:generative_process})
has two sources of modality-specific variation: modality-specific mixing
functions and modality-specific latent variables. To decouple the effect of
these two sources of variation, we conduct an ablation study \emph{without}
modality-specific latent variables. \Cref{tab:numerical_ablation_nom1m2}
presents the results, showing that content is block-identified in both the
original setting ($\f_1 = \f_2$,
\Cref{subtab:numerical_ablation_nom1m2_vanilla}) and the multimodal setting
($\f_1 \not= \f_2$, \Cref{subtab:numerical_ablation_nom1m2_multimodal}).
Compared to \Cref{tab:numerical_m1m2}, we observe that the content prediction
is improved slightly in the case without modality-specific latent variables.
Hence, our results suggest that contrastive learning can block-identify content
factors in the multimodal setting with and without modality-specific latent
variables.

\paragraph{Numerical simulation with discrete latent factors}
Extending the numerical simulation from \Cref{subsec:numerical_experiment}, we
test block-identifiability of content information when observations are
generated from a mixture of continuous and discrete latent variables, thus
violating one of the assumptions from \Cref{th:main}. In this setting, content,
style and modality-specific information are random variables with 5 components
sampled from either a continuous normal distribution or a discrete multinomial
distribution with $k$ classes, for which we experiment with different ${k \in
\{3,4,\ldots,10\}}$. For all settings, we train an encoder with the InfoNCE
objective and set the encoding size to 5 dimensions. The other hyperparameters
used in this set of experiments are detailed in \Cref{tab:numerical_details}.
To ensure convergence of the models, we extended the number of training
iterations to 600,000 and 3,000,000 for experiments with discrete
style/modality-specific and discrete content variables respectively. With
discrete style or modality-specific variables and continuous content
(\Cref{subfig:stylediscrete,subfig:msdiscrete}), the results suggest that
content is block-identified, since the prediction of style and
modality-specific information is at chance level (i.e., $\textit{accuracy} =
1/k$) while content is consistently fully recovered ($R^{2} \geq 0.99$). In the
opposite setting, with continuous style and modality-specific variables and
discrete content (\Cref{subfig:contentdiscrete}), the number of content classes
appears to be a critical factor for block-identifiability of content: while
content is always encoded well, style information is also encoded to a
significant extent when the number of content classes is small, but
significantly less style can be recovered when the number of content classes
increases. Through this set of experiments, we challenge the assumption that
\textit{all} generative factors should be continuous (c.f.,
\Cref{sec:generative_process}) and show that block-identifiability of content
can still be satisfied when content is continuous while style or
modality-specific variables are discrete. On the other hand, style is encoded
to a significant extent when content is discrete, which might explain our
observation for the image/text experiment, where we saw that, in the presence
of discrete content factors, some style information can be encoded.

\begin{figure}[t]
\vspace{-0em}
    \centering
    \begin{subfigure}[t]{0.3\textwidth}
        \includegraphics[width=1.0\textwidth]{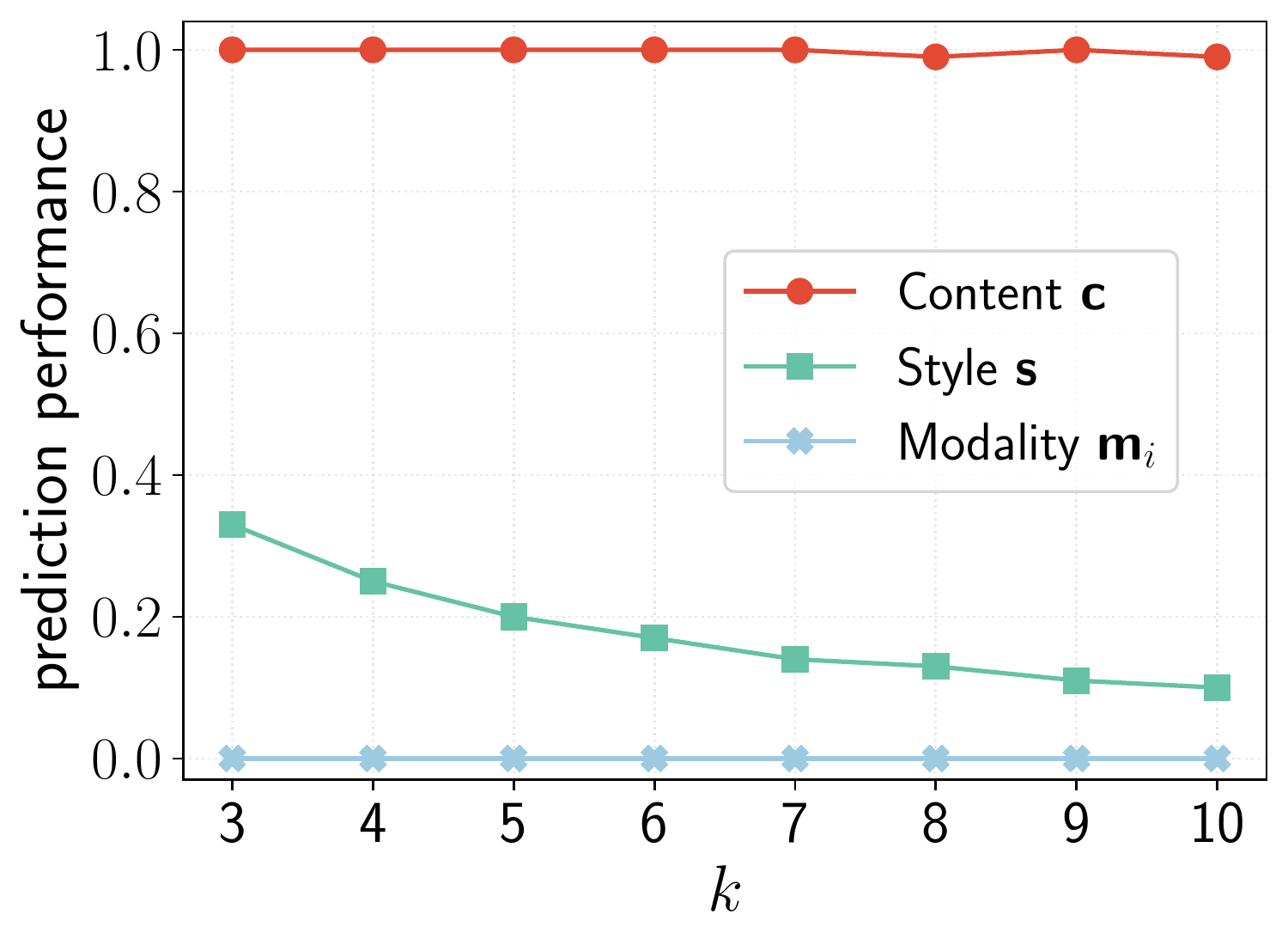}
        \caption{Only $\s$ is discrete}
        \label{subfig:stylediscrete}
    \end{subfigure}%
    \quad
    \begin{subfigure}[t]{0.3\textwidth}
        \includegraphics[width=1.0\textwidth]{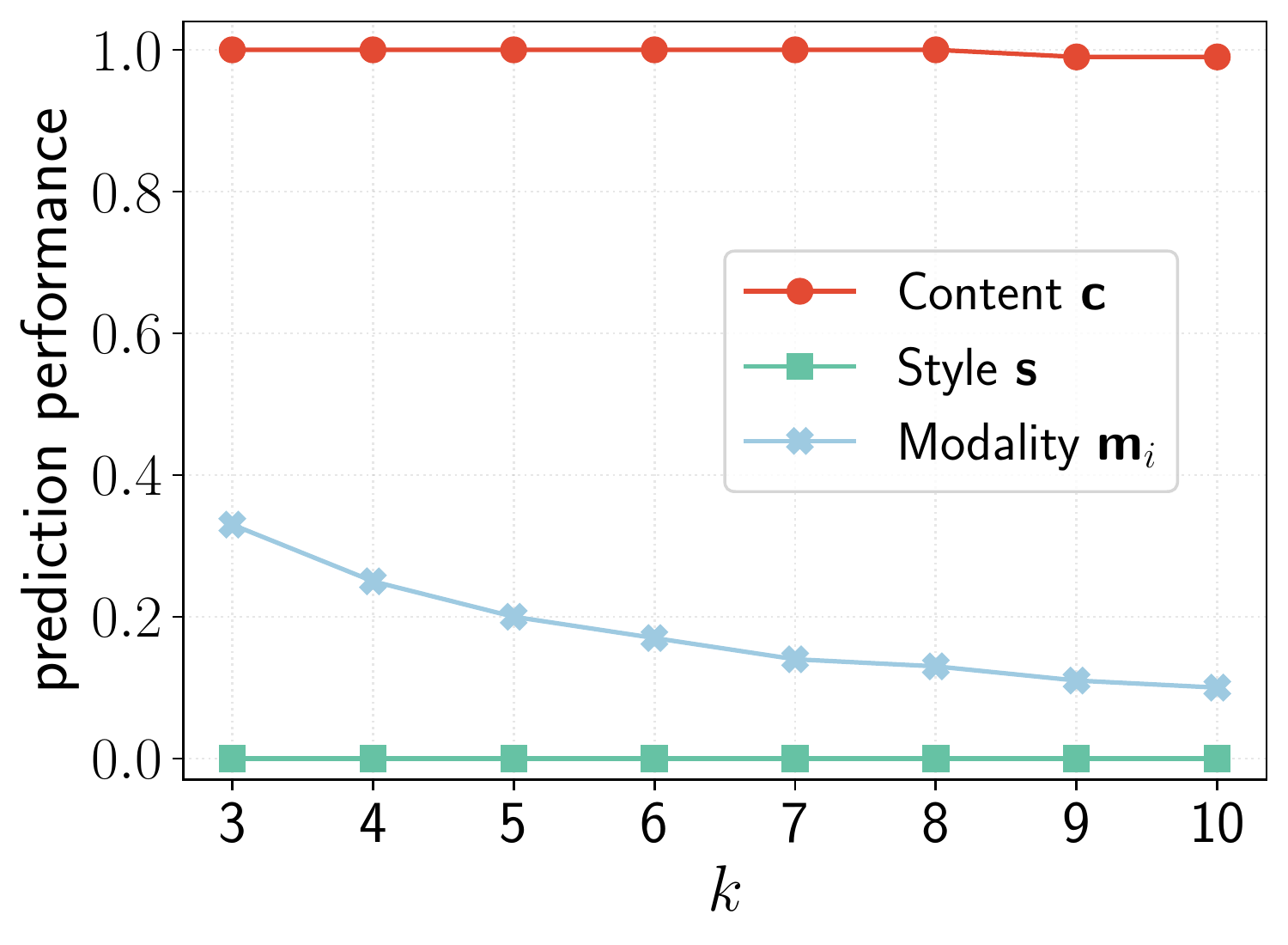}
        \caption{Only $\m_1, \m_2$ are discrete}
        \label{subfig:msdiscrete}
    \end{subfigure}%
    \quad
    \begin{subfigure}[t]{0.3\textwidth}
        \centering
        \includegraphics[width=1.0\textwidth]{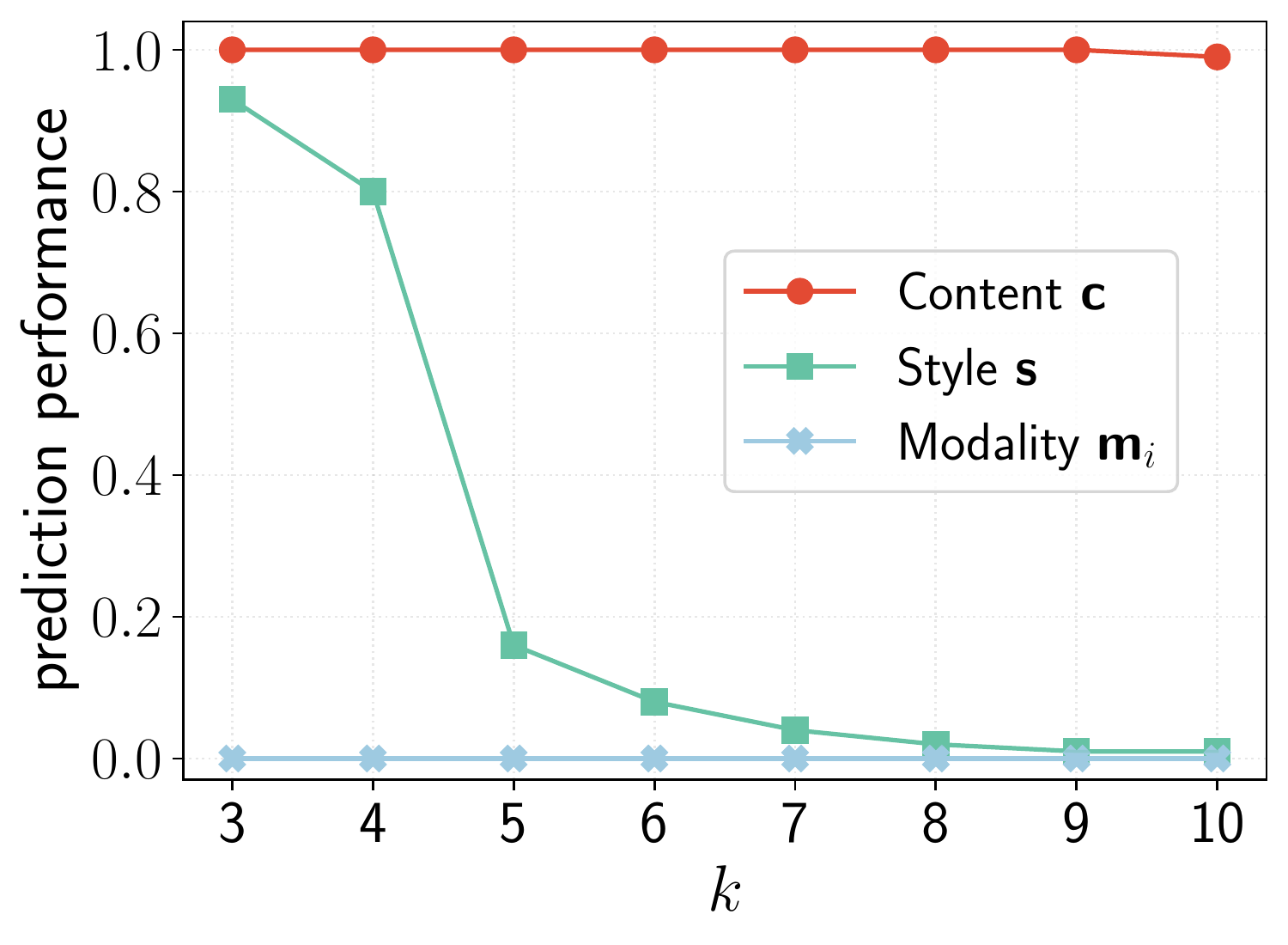}
        \caption{Only $\c$ is discrete}
        \label{subfig:contentdiscrete}
    \end{subfigure}
    \hfill
    \caption{%
      Numerical simulations with discrete latent factors. The results
      show three settings in each of which one group of latent variables is
      discrete while the remaining groups are continuous. Continuous variables
      are normally distributed, whereas discrete variables are sampled from a
      multinomial distribution with $k$ distinct classes.  We measure the
      prediction performance with a nonlinear model in terms of the $R^2$
      coefficient of determination for continuous factors and classification
      accuracy for discrete factors respectively. Each point denotes the
      average across three seeds and error bars show the standard deviation,
      which is relatively small. 
    }
\label{fig:discrete} 
\end{figure}

\paragraph{Dimensionality ablations for the numerical simulation}
To test the effect of latent dimensionality on identifiability,
\Cref{fig:numerical_dimensionality_ablations} presents dimensionality ablations
where we keep the number of content dimensions fixed and only vary the number
of style or modality-specific dimensions, $n_s$ and $n_m$ respectively.
\Cref{subfig:numerical_mdim_ablation,subfig:numerical_sdim_ablation} confirm
that block-identifiability of content still holds when we significantly
increase the number of style or modality-specific dimensions, as the
representation consistently encodes only content and no style or
modality-specific information. In
\Cref{subfig:numerical_mdim_learning_curves,subfig:numerical_sdim_learning_curves},
we can observe that the training loss decreases more slowly when we increase
the dimensionality of $n_c$ and $n_s$ respectively, which provides an intuition
that the sample complexity might increase with the number of style and
modality-specific dimensions.

\begin{figure}[p]
\vspace{-2.0em}
    \centering
    \begin{subfigure}[t]{0.45\textwidth}
        \centering
        \includegraphics[width=1.0\textwidth]{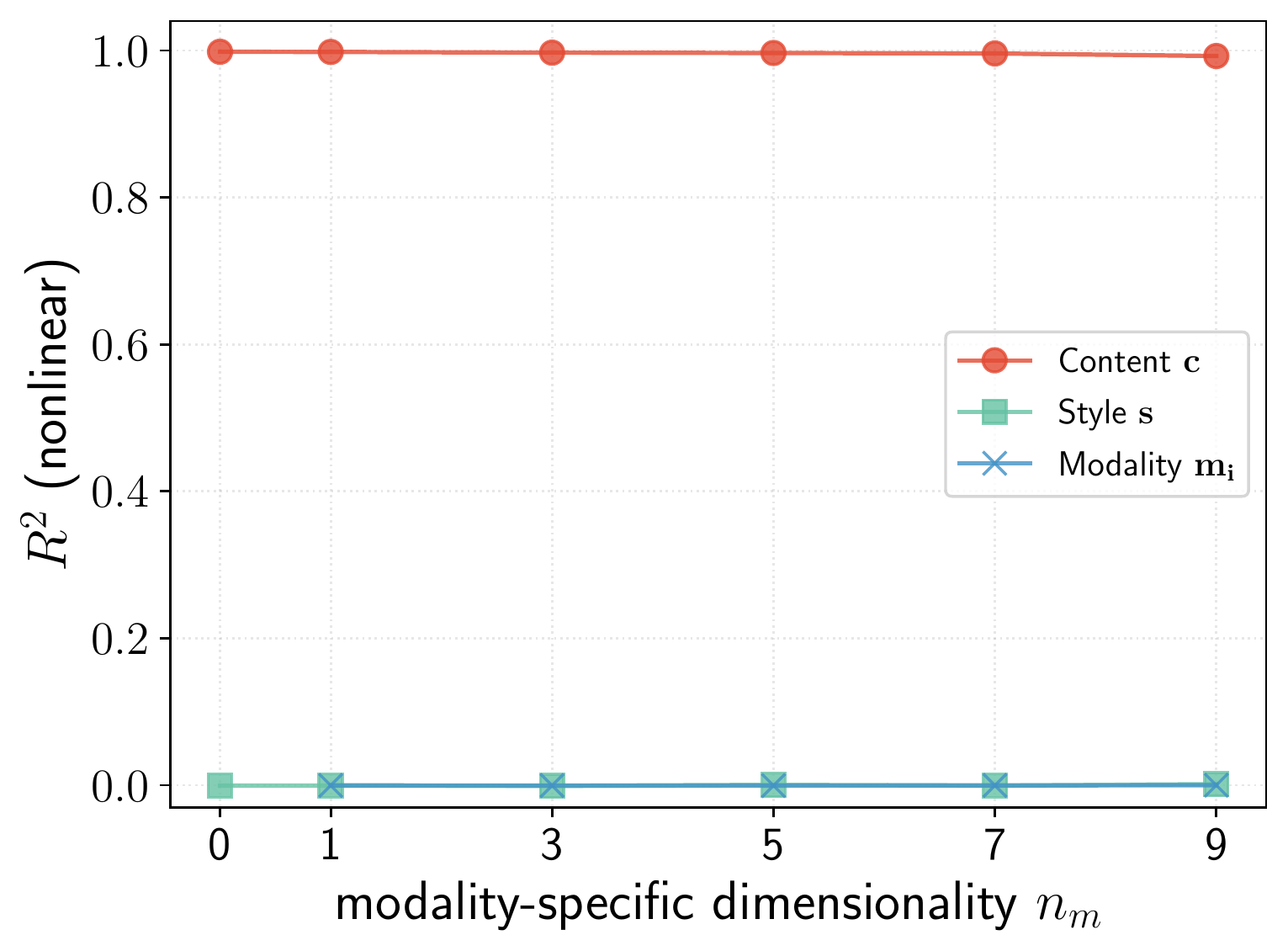}
        \caption{Prediction performance as a function of $n_m$}
        \vspace{0.75em}
        \label{subfig:numerical_mdim_ablation}
    \end{subfigure}
    \quad
    \begin{subfigure}[t]{0.45\textwidth}
        \centering
        \includegraphics[width=1.0\textwidth]{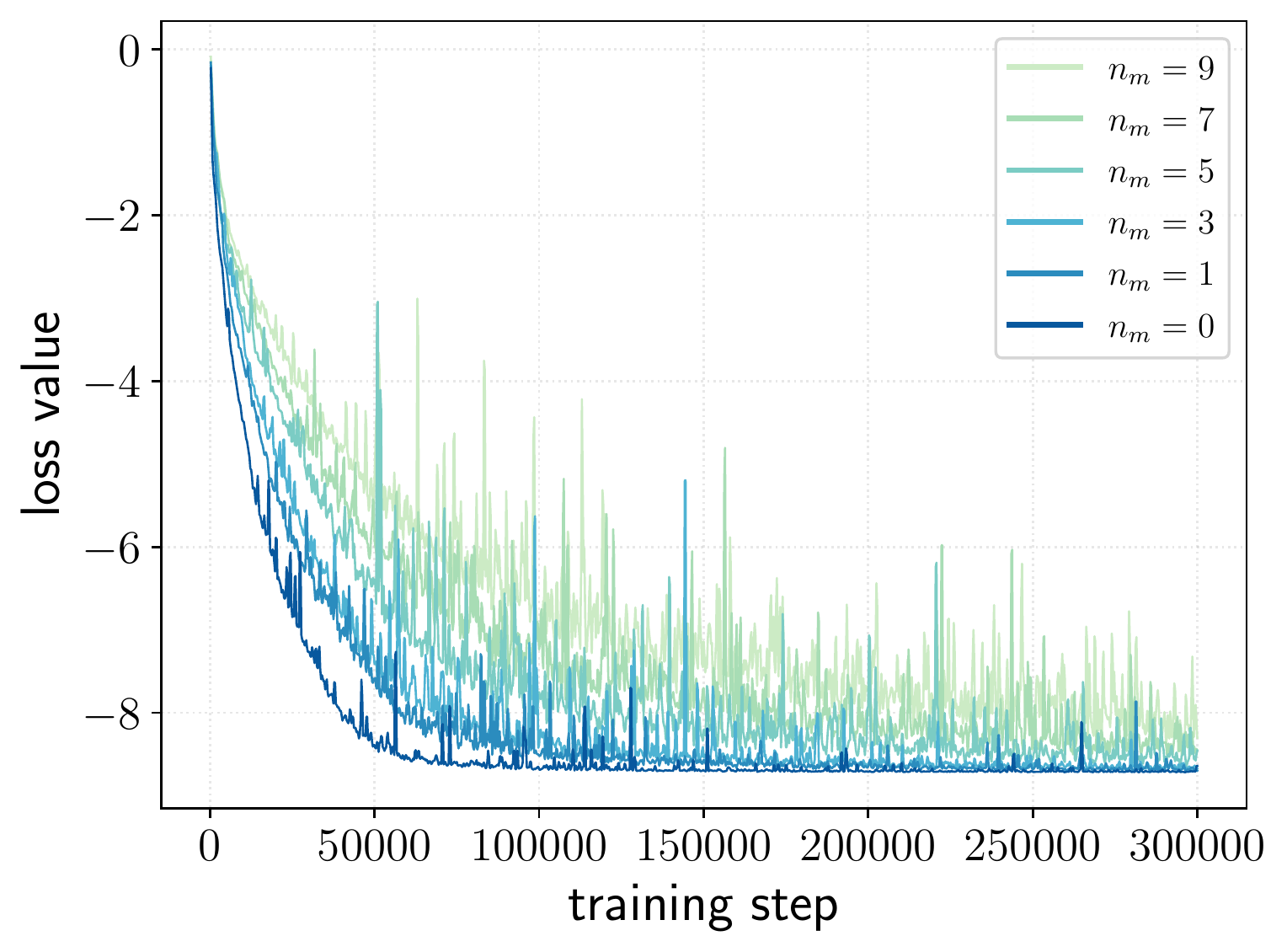}
        \caption{Learning curves for different $n_m$}
        \label{subfig:numerical_mdim_learning_curves}
    \end{subfigure}
    \begin{subfigure}[t]{0.45\textwidth}
        \centering
        \includegraphics[width=1.0\textwidth]{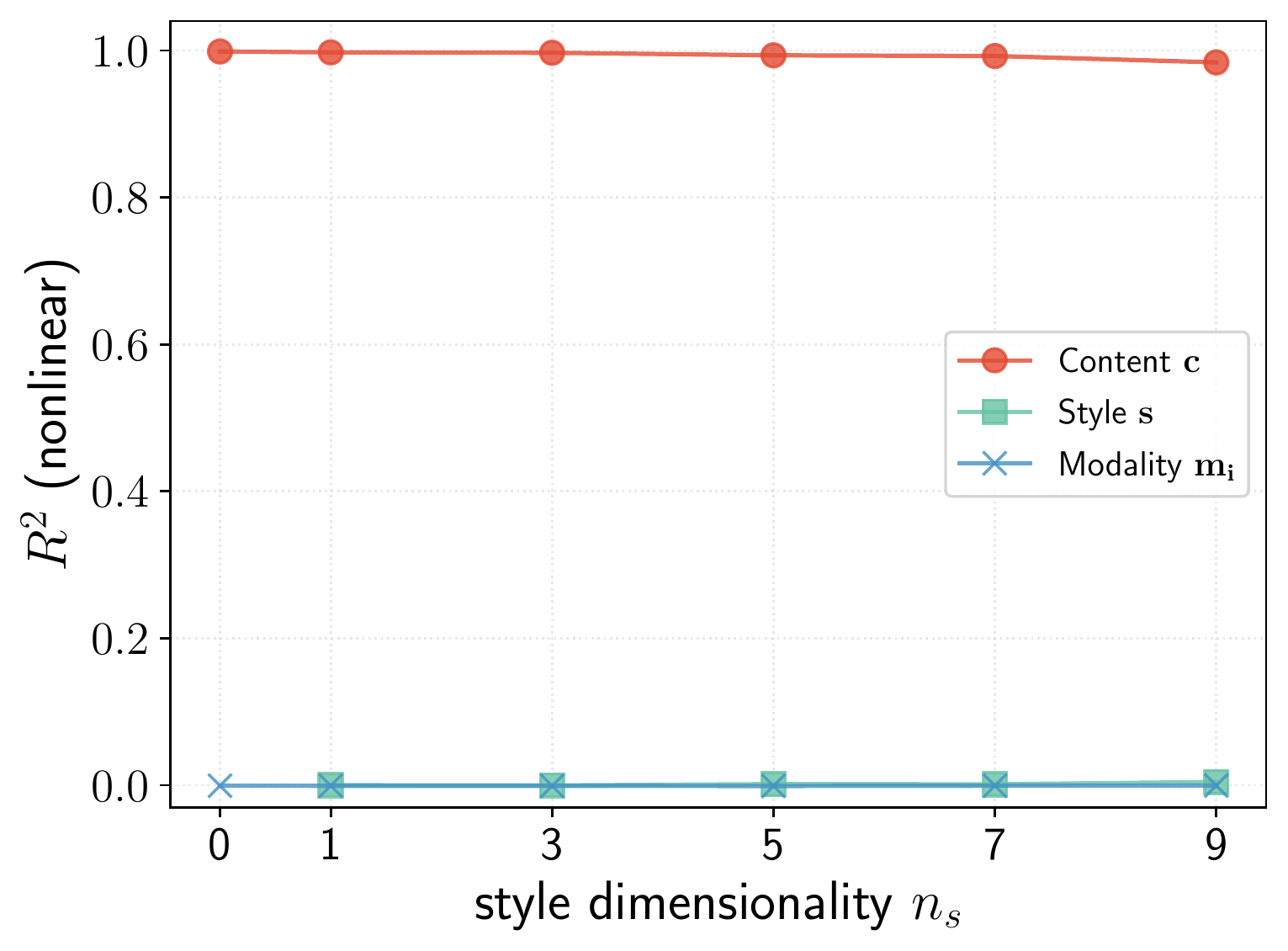}
        \caption{Prediction performance as a function of $n_s$}
        \label{subfig:numerical_sdim_ablation}
    \end{subfigure}
    \quad
    \begin{subfigure}[t]{0.45\textwidth}
        \centering
        \includegraphics[width=1.0\textwidth]{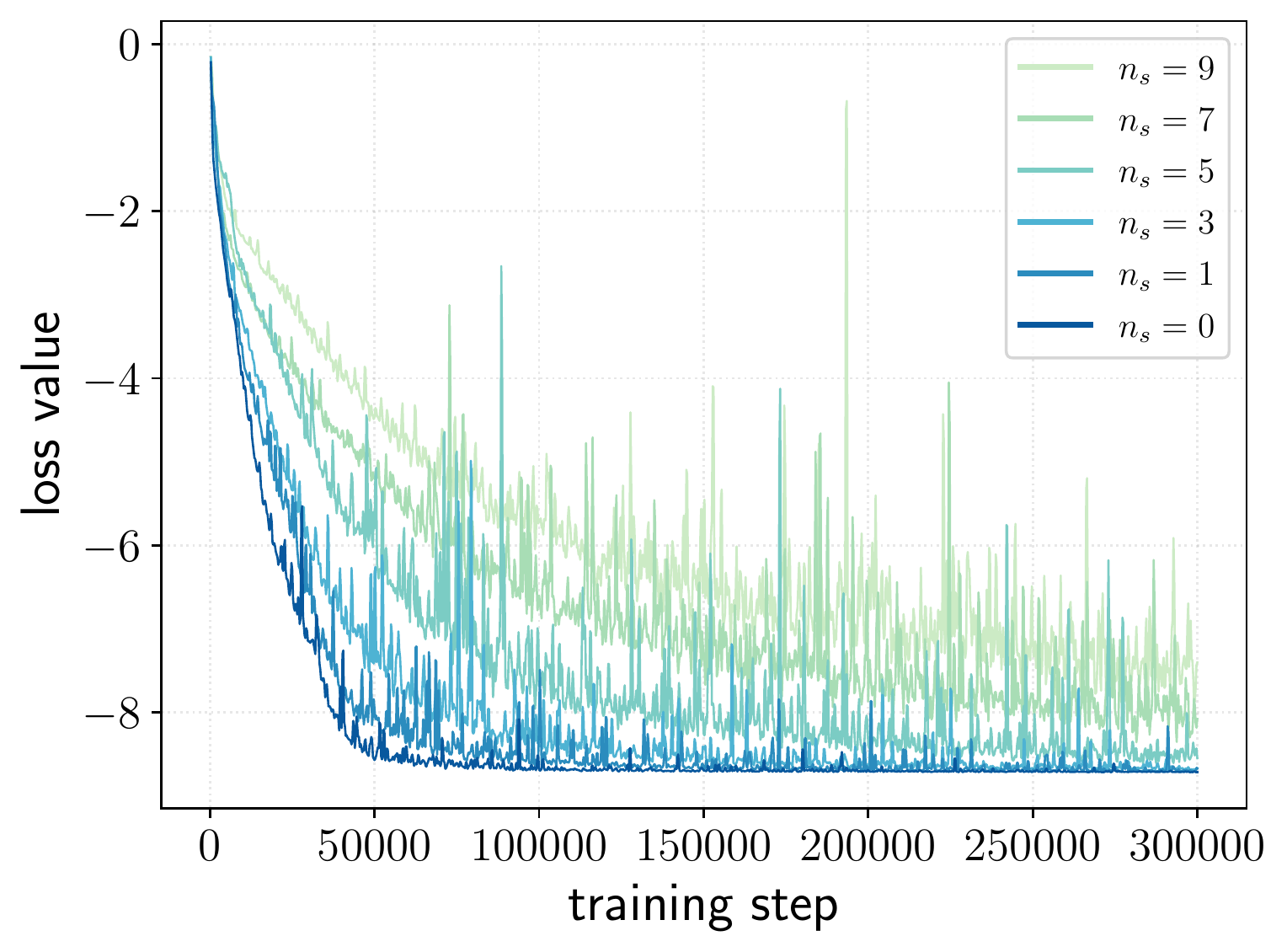}
        \caption{Learning curves for different $n_s$}
        \label{subfig:numerical_sdim_learning_curves}
    \end{subfigure}
    \caption{%
      Dimensionality ablation for the numerical simulation. We consider
      the multimodal setting with mutually independent factors and test the
      effect of latent dimensionality on identifiability by keeping the number
      of content dimensions fixed and only varying the number of style or
      modality-specific dimensions ($n_s$ and $n_m$ respectively). In
      \Cref{subfig:numerical_mdim_ablation,subfig:numerical_sdim_ablation} we
      measure the nonlinear prediction performance in terms of the $R^2$
      coefficient of determination of a nonlinear regression model that
      predicts the respective ground truth factor ($\c$, $\s$, or $\m_i$) from
      the learned representation. In
      \Cref{subfig:numerical_mdim_learning_curves,subfig:numerical_sdim_learning_curves},
      we plot the learning curves (i.e., the training loss) of the respective
      models to compare how fast they converge.
    }
\label{fig:numerical_dimensionality_ablations}
\vspace{2.5em}
    \centering
    \begin{subfigure}[t]{0.45\textwidth}
        \includegraphics[width=1.0\textwidth]{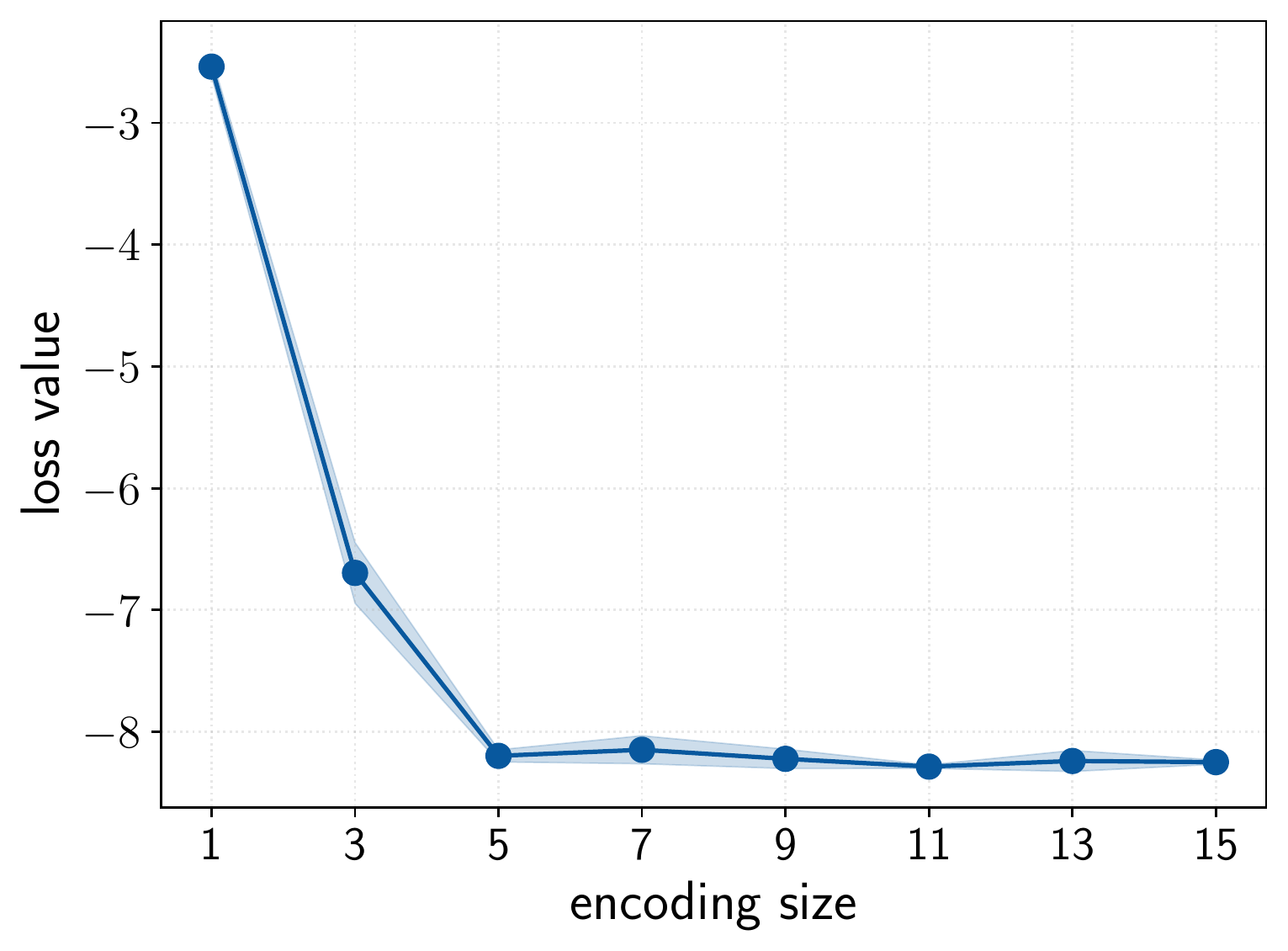}
        \caption{Validation loss for the numerical simulation}
        \label{subfig:model_selection_numerical}
    \end{subfigure}%
    \quad
    \begin{subfigure}[t]{0.45\textwidth}
        \centering
        \includegraphics[width=1.0\textwidth]{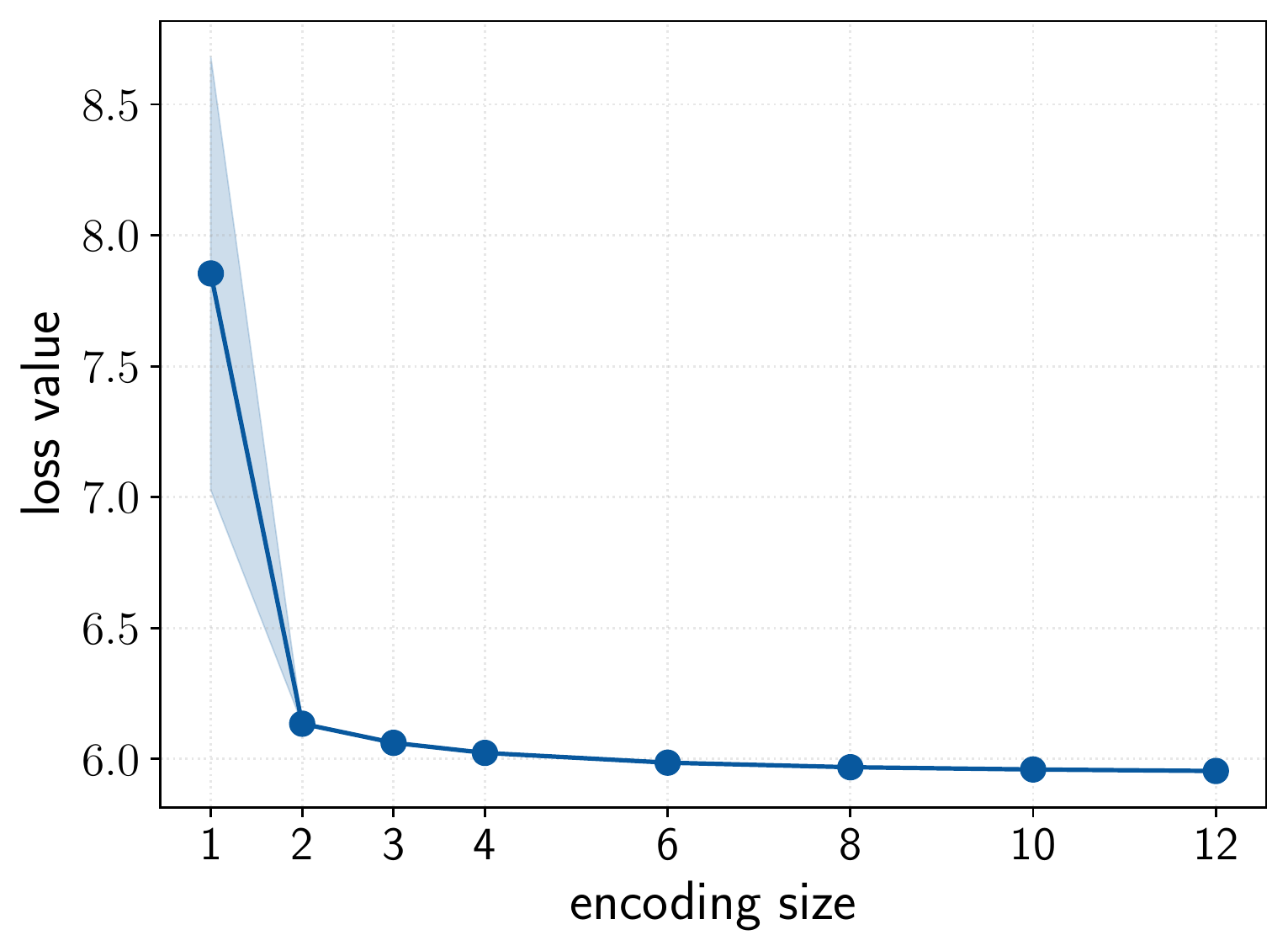}
        \caption{Validation loss for the image/text experiment}
        \label{subfig:model_selection_imagetext}
    \end{subfigure}
    \caption{%
      An attempt at estimating of the number of content factors using the
      validation loss. The validation loss corresponds to the value of the
      $\SymInfoNCE$ objective computed on a holdout dataset. Since we are
      interested in estimating the true number of content factors to select the
      encoding size appropriately, we plot the validation loss as a function of
      the encoding size. We show the validation loss for the numerical simulation
      with independent factors (\Cref{subfig:model_selection_numerical}) and for
      the image/text experiment (\Cref{subfig:model_selection_imagetext})
      respectively.
    }
\label{fig:model_selection}
\end{figure}

\paragraph{Estimating the number of content factors}
The estimation of the number of content factors is an important puzzle piece,
since \Cref{th:main} assumes that the number of content factors is known or
that it can be estimated.  In practice, the number of content factors can be
viewed as a single hyperparameter \citep[e.g.,][]{Locatello2020} that can be
tuned with respect to a suitable model selection metric. For instance, one
could use the validation loss for model selection, which would be convenient
since the validation loss only requires a holdout dataset and no additional
supervision. In \Cref{fig:model_selection}, we plot the validation loss
(averaged over 2,000 validation samples) as a function of the encoding size for
both experiments used in our paper. Results for the numerical simulation are
shown in \Cref{subfig:model_selection_numerical} and for the image/text
experiment in \Cref{subfig:model_selection_imagetext}. For both datasets, we
observe that the validation loss increases most significantly in the range
around the true number of content factors. For the numerical simulation, the
results look promising as they show a clear ``elbow'' \citep{James2013} at the
correct value of 5, which corresponds to the true number of content factors.
The results are less clear for the image/text experiment, where the elbow
method might suggest the range of 2-4 content factors, while the true value is
3. While these initial results look promising, we believe that more work is
required to investigate the estimation of the number of content factors and the
design of suitable heuristics, which are interesting directions for future
research.

\newpage
\clearpage

\paragraph{Evaluation with test-time interventions}
Previously, we observed that style can be predicted to some degree when there
are causal dependencies from content to style (\Cref{tab:numerical_m1m2}),
which can be attributed to style information being partially predictable from
the encoded content information in the causal setup. To verify that the
encoders only depend on content information (i.e., that content is
block-identified), we assess the trained models using a novel, more rigorous
empirical evaluation for the numerical simulation. We test the effect of
\emph{interventions} $\c \to \c'$, which perturb the content information
\emph{at test time} via batch-wise permutations of content, before generating
$\x_1' = \f_1(\c',\s, \m_1)$ and $\x_2' = \f_1(\c', \ts,\m_1)$. Hence, we break
the causal dependence between content and style (see illustration in
\Cref{tab:numerical_ablation_permcontent}), which allows us to better assess
whether the trained encoders depend on content or style information.
Specifically, we train the encoders for 3,000,000 iterations to ensure
convergence and then train nonlinear regression models to predict both the
original and the intervened content variables from the learned representations.
\Cref{tab:numerical_ablation_permcontent} presents our results using the
interventional setup, showing that in most cases only content information can
be recovered. We observe an exception (underlined values) in the two cases with
statistical dependencies, where some style information can be recovered, which
is expected because statistical dependencies reduce the effective
dimensionality of content \citep[cp.][]{Kuegelgen2021}. Analogously, in the
case of statistical and causal dependencies, some of the original content
information can be recovered via the encoded style information. In summary, the
evaluation with interventions provides a more rigorous assessment of
block-identifiability in the causal setup, showing that neither style nor
modality-specific information can be recovered when the encoding size matches
the true number of content dimensions.

\begin{figure}[t]
  \centering
  \scalebox{.79}{%
    \begin{tikzpicture}
    \node[obs] (X1) {$\x_1'$};%
    \node[latent,above=of X1,xshift=-1.5cm] (M1) {$\m_1$};%
    \node[latent,above=of X1,xshift=-0.5cm] (S1) {$\s$}; %
    \node[latent,above=of X1,xshift=0.5cm] (C) {$\c$}; %
    \node[latent,above=of X1,xshift=1.5cm] (C1) {$\c'$}; %
    \node[obs, xshift=2cm] (X2) {$\x_2'$};%
    \node[latent,above=of X2,xshift=0.5cm] (S2) {$\ts$}; %
    \node[latent,above=of X2,xshift=1.5cm] (M2) {$\m_2$};%
    \node[above left=0.01cm and 0.01cm of C1, outer sep = 0pt, inner sep = 0pt, text = red,rotate=-46,scale=1.2] {$\boldsymbol{\leadsto}$};
    \edge{M1,C1,S1}{X1}%
    \edge{M2,C1,S2}{X2}%
    \edge{C}{S1}%
    \edge{C}{C1}%
    \edge[bend left=50]{S1}{S2}%
  \end{tikzpicture}
  } %
  \;
  \resizebox{!}{2.4cm}{%
    \small
    \begin{tabular}[b]{ccccccc}
      \toprule
      \multicolumn{3}{c}{\textbf{Generative process}} & \multicolumn{3}{c}{$\bm{R^2}$ \textbf{(nonlinear)}}  \\
      \cmidrule(r){1-3}\cmidrule(r){4-7}
      \textbf{p(chg.)} & \textbf{Stat.} & \textbf{Cau.} & \textbf{Content $\c$} & \textbf{Content $\c'$} & \textbf{Style $\s$} & \textbf{Modality $\m_i$}\\
      \midrule
     1.0 & \xmark & \xmark  & $0.00 \pm 0.00$ & $\textbf{1.00} \pm 0.00$ &$0.00 \pm 0.00$ &$0.00 \pm 0.00$ \\
     0.75 & \xmark & \xmark & $0.00 \pm 0.00$ & $\textbf{1.00} \pm 0.00$ &$0.00 \pm 0.00$ &$0.00 \pm 0.00$ \\
     0.75 & \cmark & \xmark & $0.00 \pm 0.00$ & $\textbf{1.00} \pm 0.00$ &$\underline{0.50} \pm 0.19$ &$0.00 \pm 0.00$ \\
     0.75 & \xmark & \cmark & $0.01 \pm 0.00$ & $\textbf{0.98} \pm 0.00$ &$0.03 \pm 0.01$ &$0.00 \pm 0.00$ \\
     0.75 & \cmark & \cmark & $\underline{0.28} \pm 0.14$ & $\textbf{0.91} \pm 0.03$ &$\underline{0.39} \pm 0.20$ &$0.00 \pm 0.00$ \\
      \bottomrule
    \end{tabular}
  }  %
  \caption{%
    Evaluation with test-time interventions. We use the interventional setup
    that is illustrated on the left, i.e., perturbed samples $\x_1', \x_2'$
    that are generated from the intervened content $\c'$, which is a copy of
    the original content $\c$ with an intervention, i.e., a batch-wise
    permutation (\textcolor{red}{$\leadsto$}) that makes $\c'$ independent of
    $\s$.  Each row presents the results of a different setup with varying
    style-change probability p(chg.) and possible statistical (Stat.) and/or
    causal (Caus.) dependencies. Each value denotes the $R^2$ coefficient of
    determination (averaged across 3 seeds) for a nonlinear regression model
    that predicts the respective ground truth factor ($\c, \c', \s$, or $\m_i$)
    from the learned representation.
  }
\label{tab:numerical_ablation_permcontent}
\end{figure}
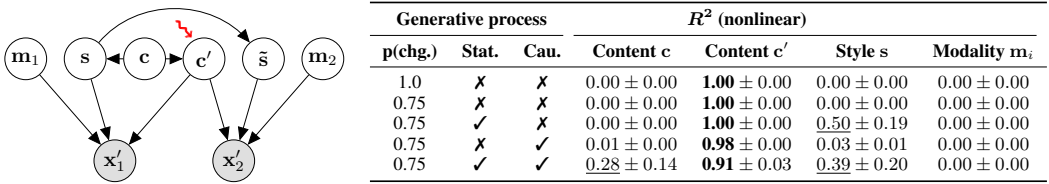 

\paragraph{High-dimensional image pairs with continuous latents}
\begin{wrapfigure}{r}{0.28\textwidth} 
\vspace{-4pt}
  \begin{center}
    \includegraphics[width=0.28\textwidth]{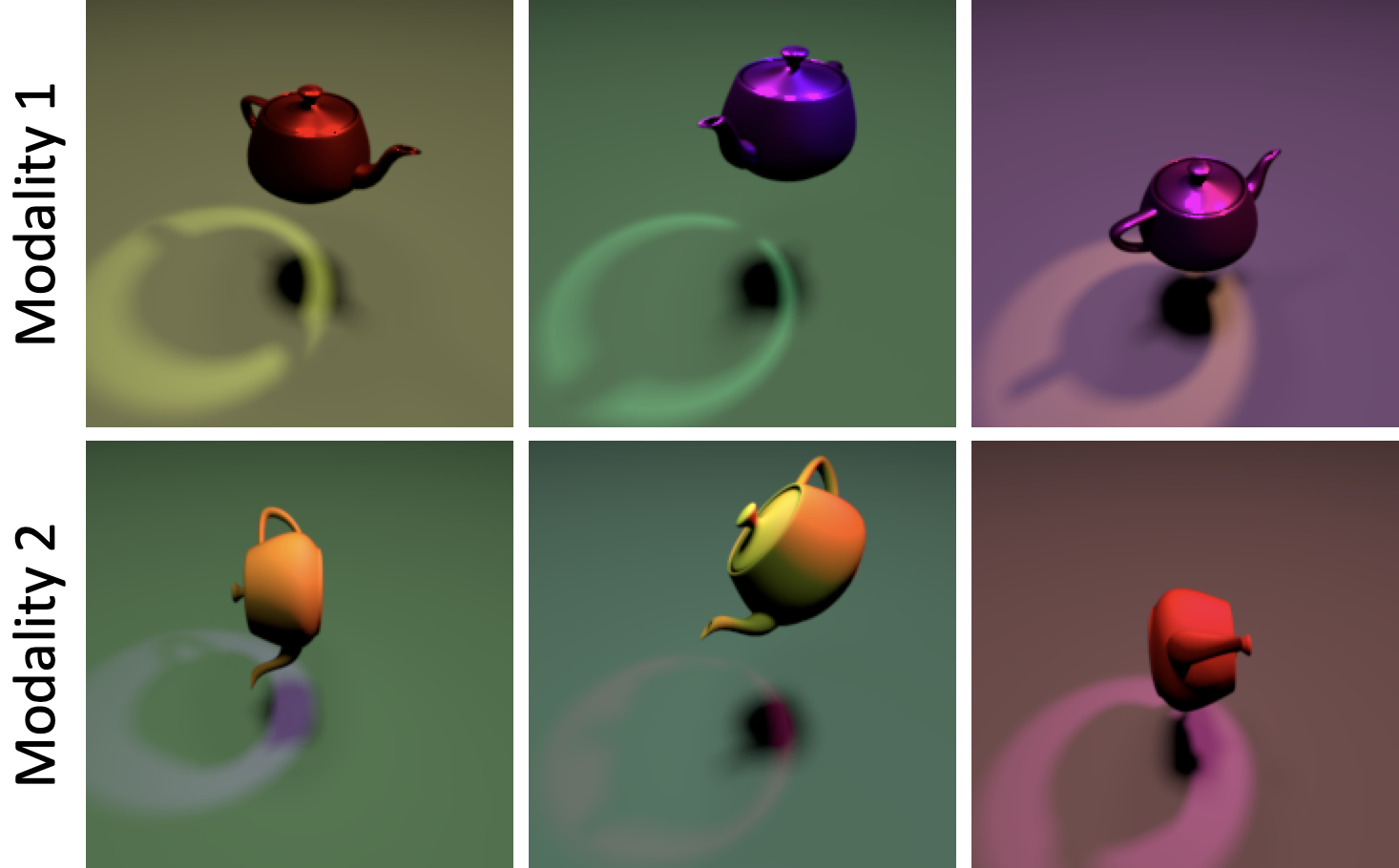}
    \vspace{-15pt}
    \caption{Examples of high-dimensional image~pairs.}
    \label{fig:m3DIdent0}
  \end{center}
\vspace{-13pt}
\end{wrapfigure} 
To bridge the gap between continuous and discrete data, we provide an
additional experiment that offers a realistic setup but uses only continuous
latent variables to satisfy the assumptions of \Cref{th:main}.  Previously, in
\Cref{subsec:imagetext_experiment}, we used a complex multimodal dataset of
image/text pairs, which were generated from a combination of continuous and
discrete latent factors. Now, we consider a different dataset that consists of
\emph{pairs high-dimensional images} generated from a set of continuous
latents, which is more in line with our theoretical assumptions. Note that
datasets with pairs of images are common in practice, for example, in medical
imaging where patients are assessed using multiple views (e.g., images from
different angles) or multiple modalities (e.g., as in PET-CT imaging).  To
generate the data, we adapt the code from 3DIdent \citep{Zimmermann2021} to
render pairs of images, for which the object position is always shared (i.e.,
content), the object-, spotlight- and background-color is stochastically shared
(i.e., style), and modality-specific factors are object rotation for one
modality and spotlight position for the other. Additionally, we render the
objects using different textures to simulate a modality-specific mixing
process.  Samples of image pairs are shown in \Cref{fig:m3DIdent0} and further
details about the dataset can be found in
\Cref{sec:app-details-to-experimental-setting}.  We train the encoders with the
InfoNCE objective for 60,000 iterations using the same architectures and
hyperparameters as for \emph{Multimodal3DIdent} (\Cref{tab:imgtxt_details}),
and again evaluate the $R^2$ coefficient of determination using a kernel ridge
regression that predicts the respective ground truth factor from the learned
representations.

\Cref{fig:multiview_results} present our results for the dataset of image
pairs, showing the prediction performance as a function of the encoding size
for the setting with causal dependencies (\Cref{fig:multiview_causal}) and the
setting with mutually independent latent variables
(\Cref{fig:multiview_noncausal}) respectively.  In both settings, content
information (i.e., object position) is recovered when sufficient encoding
capacity is available. Style and modality-specific information, on the other
hand, are discarded independent of the encoding size. In
\Cref{fig:multiview_causal} we observe the recovery of some style information,
which is expected because style can be predicted to some degree from the
encoded content information when there is a causal dependence of style on
content. Overall, these findings lend further support to our theoretical result
from \Cref{th:main}, as we investigate a realistic setting with only continuous
latent factors, which is more in line with our assumptions. Notably, the
results appear more consistent with our theory, e.g., showing that less style
and modality-specific information is encoded, compared to our results for the
image/text experiment, where we used a combination of continuous and discrete
latent factors.

\begin{figure*}[t]
    \centering
    \begin{subfigure}[t]{0.49\textwidth}
        \includegraphics[width=1.0\textwidth]{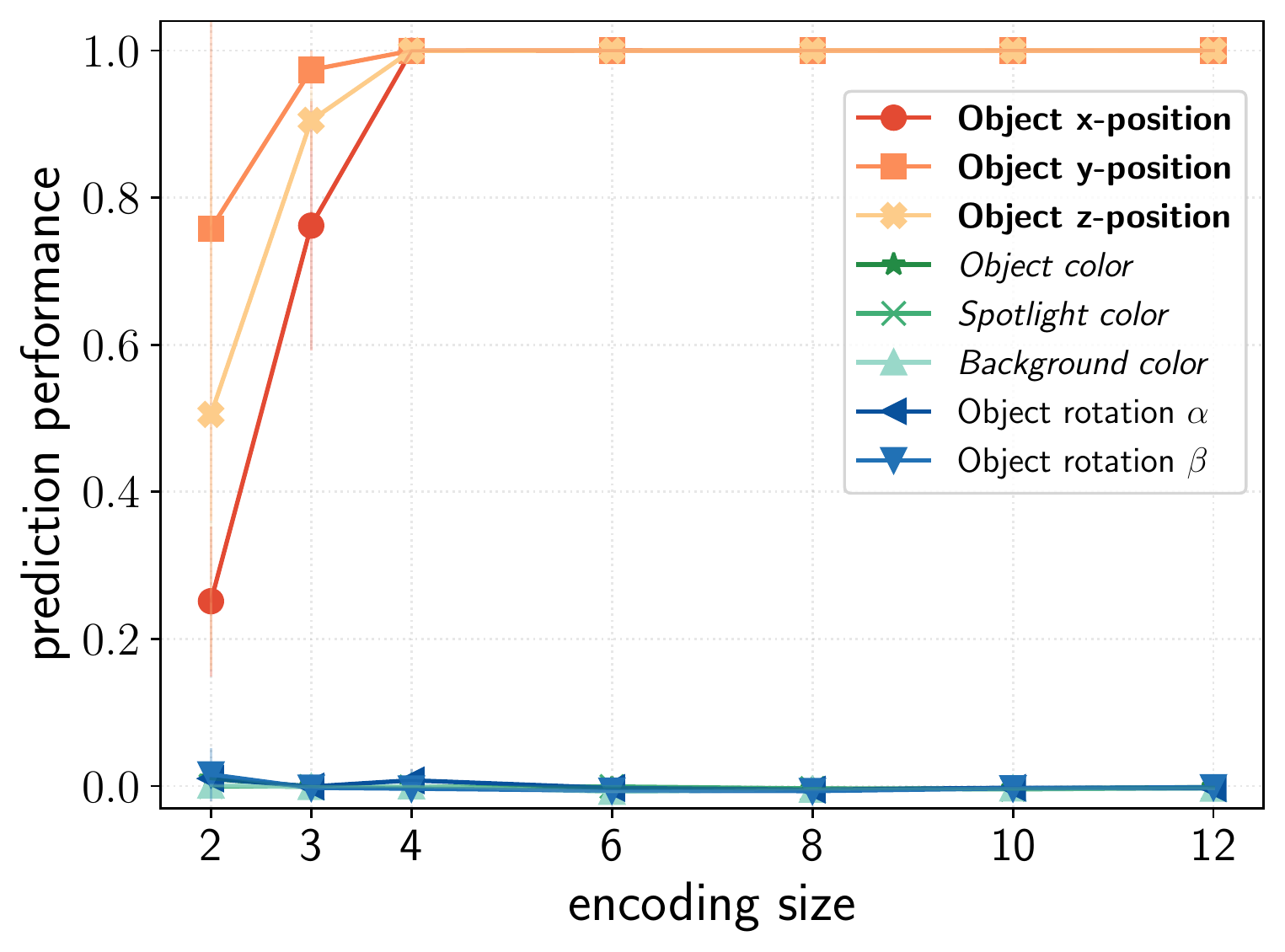}
        \caption{With mutually independent factors}
        \label{fig:multiview_noncausal}
    \end{subfigure}%
    \hfill
    \begin{subfigure}[t]{0.49\textwidth}
        \centering
        \includegraphics[width=1.0\textwidth]{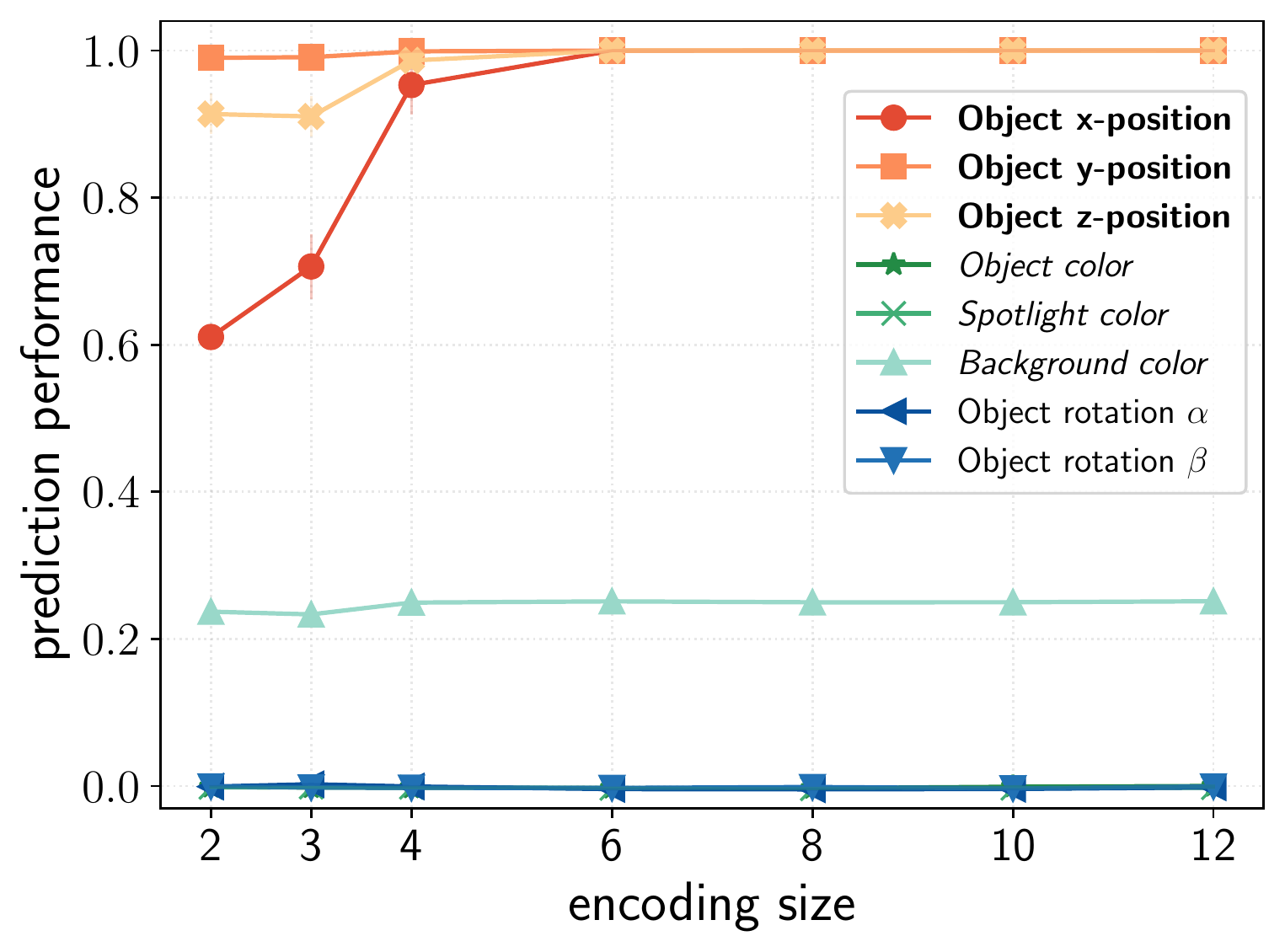}
        \caption{With causal dependencies}
        \label{fig:multiview_causal}
    \end{subfigure}
    \caption{%
      Result with pairs of high-dimensional images. As a function of the encoding
      size of the model, we assess the nonlinear prediction of ground truth
      factors to quantify how well the learned representation encodes the
      respective factors. Content factors are denoted in bold, style factors in
      italic, and modality-specific factors in regular font. Each point denotes
      the average $R^2$ score across three seeds and bands show one standard
      deviation. 
    }
\label{fig:multiview_results}
\end{figure*}

\paragraph{Multimodal3DIdent with mutually independent factors}
For the results of the image/text experiment in the main text
(\Cref{subsec:imagetext_experiment}) we used the \emph{Multimodal3DIdent}
dataset, which we designed such that object color is causally dependent on the
x-position of the object to impose a causal dependence of style on content. In
\Cref{fig:imagetext_noncausal}, we provide a similar analysis using a version
of the dataset \emph{without} the causal dependence, i.e., with mutually
independent factors. For both modalities, we observe that object color is only
encoded when the encoding size is larger than four, i.e., when there is excess
capacity beyond the capacity needed to encode all content factors. Hence, these
results corroborate that contrastive learning can block-identify content
factors in a complex multimodal setting with heterogeneous image/text pairs.

\begin{figure*}[t]
\vspace{2em}
   \centering
    \begin{subfigure}[t]{0.49\textwidth}
        \includegraphics[width=1.0\textwidth]{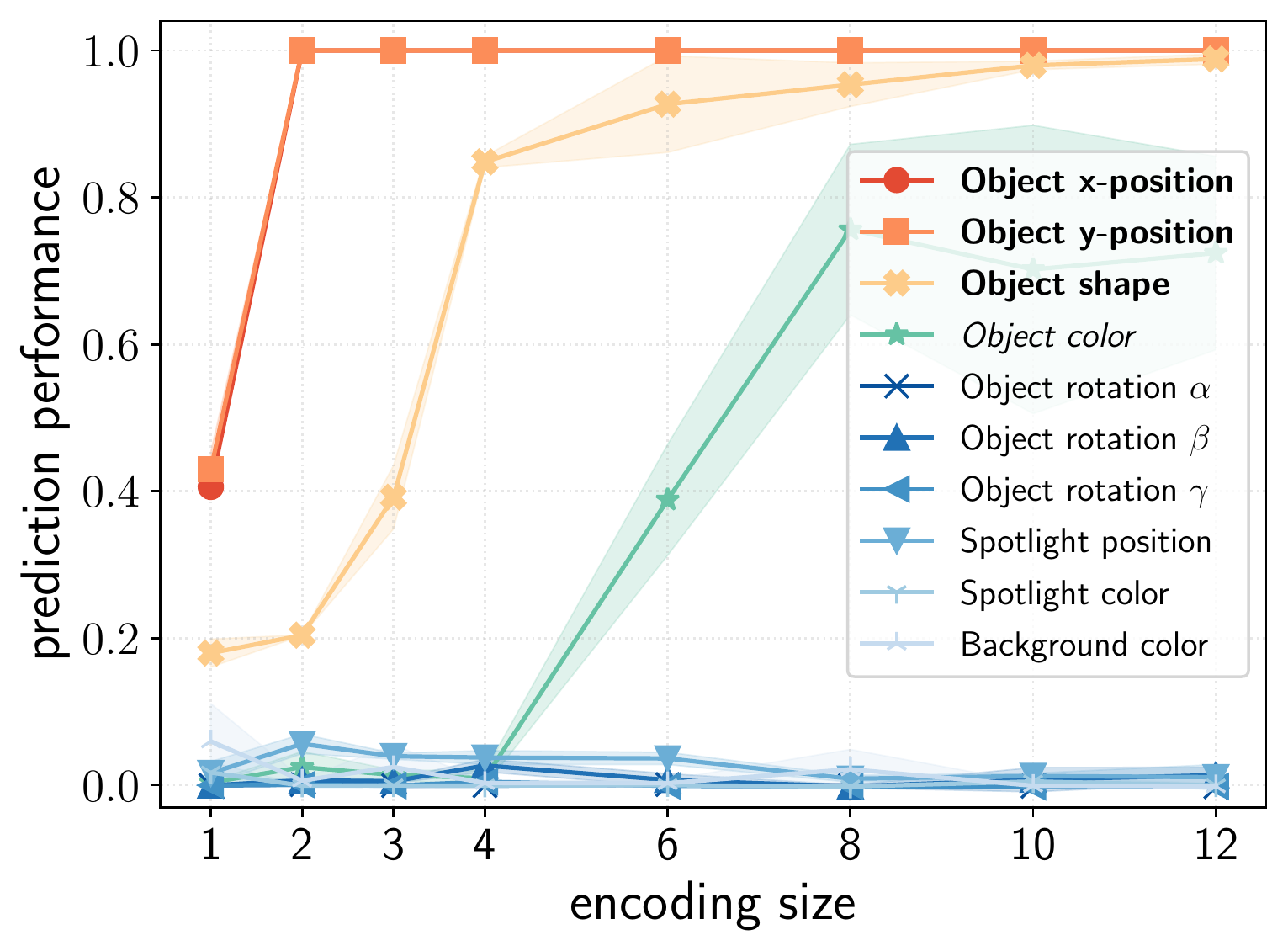}
        \caption{Prediction of image factors}
        \label{subfig:imagetext_results_image_noncausal}
    \end{subfigure}%
    \hfill
    \begin{subfigure}[t]{0.49\textwidth}
        \centering
        \includegraphics[width=1.0\textwidth]{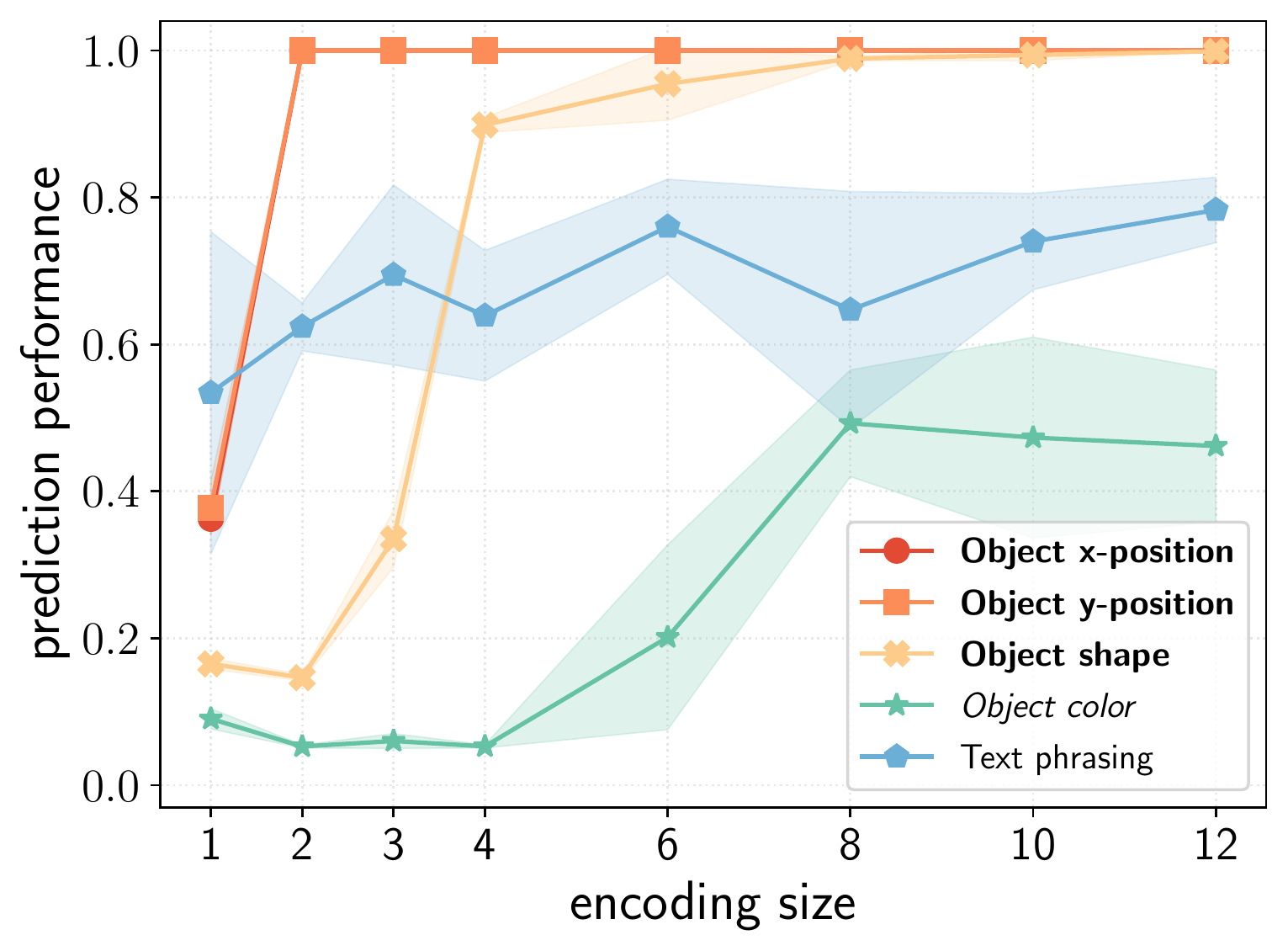}
        \caption{Prediction of text factors}
        \label{subfig:imagetext_results_text_noncausal}
    \end{subfigure}
    \caption{%
      Result on \emph{Multimodal3DIdent} with mutually independent factors. As
      a function of the encoding size of the model, we assess the nonlinear
      prediction of ground truth image factors (left subplot) and text factors
      (right subplot) to quantify how well the learned representation encodes
      the respective factors. Content factors are denoted in bold and style
      factors in italic. Along the x-axis, we vary the encoding size, i.e., the
      output dimensionality of the model. We measure the prediction performance
      in terms of the $R^2$ coefficient of determination for continuous factors
      and classification accuracy for discrete factors respectively. Each point
      denotes the average across three seeds and bands show one standard
      deviation. 
    }
\label{fig:imagetext_noncausal}
\end{figure*}

\end{document}